 \font\msbm=msbm10
\def\dsR{\hbox{{\msbm \char "52}}}
\def\lr{\color{black}}
\renewcommand{\mathbf}{\boldsymbol}
\newcommand{\norm}[2]{\left\| #1 \right\|_{#2}}
\newcommand{\Ne}{{\mathbb{N}}}
\theoremstyle{thmstyleone}%
\newtheorem{theorem}{Theorem}%  meant for continuous numbers
\newtheorem{proposition}[theorem]{Proposition}% 
\theoremstyle{thmstyletwo}%
\theoremstyle{thmstylethree}%
\newtheorem{definition}{Definition}%
\begin{document}

\title[Tensor Train Random Projection]{Tensor Train Random Projection}

%%=============================================================%%
%% Prefix	-> \pfx{Dr}
%% GivenName	-> \fnm{Joergen W.}
%% Particle	-> \spfx{van der} -> surname prefix
%% FamilyName	-> \sur{Ploeg}
%% Suffix	-> \sfx{IV}
%% NatureName	-> \tanm{Poet Laureate} -> Title after name
%% Degrees	-> \dgr{MSc, PhD}
%% \author*[1,2]{\pfx{Dr} \fnm{Joergen W.} \spfx{van der} \sur{Ploeg} \sfx{IV} \tanm{Poet Laureate} 
%%                 \dgr{MSc, PhD}}\email{iauthor@gmail.com}
%%=============================================================%%
\author[1]{\fnm{Yani} \sur{Feng}}\email{fengyn@shanghaitech.edu.cn}
\equalcont{These authors contributed equally to this work.}

\author[2]{\fnm{Kejun} \sur{Tang}}\email{tangkj@pcl.ac.cn}
\equalcont{These authors contributed equally to this work.}

\author[3]{\fnm{Lianxing} \sur{He}}\email{13816474811@139.com}
\author[1]{\fnm{Pingqiang} \sur{Zhou}} \email{zhoupq@shanghaitech.edu.cn}
\author*[1]{\fnm{Qifeng} \sur{Liao}} \email{liaoqf@shanghaitech.edu.cn}

\affil[1]{\orgdiv{School of Information Science and Technology}, \orgname{ShanghaiTech University}, \orgaddress{ \city{Shanghai}, \postcode{200120}, \country{China}}}

% \affil[2]{\orgdiv{Department}, \orgname{Organization}, \orgaddress{\street{Street}, \city{City}, \postcode{10587}, \state{State}, \country{Country}}}
\affil[2]{\orgname{Peng Cheng Laboratory}, \orgaddress{\city{Shenzhen}, \postcode{518000}, \country{China}}}

\affil[3]{ \orgname{Innovation Academy of Microsatellite of Chinese Academy of Sciences}, \orgaddress{ \city{Shanghai}, \postcode{201210}, \country{China}}}
% \author*[1,2]{\fnm{First} \sur{Author}}\email{iauthor@gmail.com}

% \author[2,3]{\fnm{Second} \sur{Author}}\email{iiauthor@gmail.com}
% \equalcont{These authors contributed equally to this work.}

% \author[1,2]{\fnm{Third} \sur{Author}}\email{iiiauthor@gmail.com}
% \equalcont{These authors contributed equally to this work.}

% \affil*[1]{\orgdiv{Department}, \orgname{Organization}, \orgaddress{\street{Street}, \city{City}, \postcode{100190}, \state{State}, \country{Country}}}

% \affil[2]{\orgdiv{Department}, \orgname{Organization}, \orgaddress{\street{Street}, \city{City}, \postcode{10587}, \state{State}, \country{Country}}}

% \affil[3]{\orgdiv{Department}, \orgname{Organization}, \orgaddress{\street{Street}, \city{City}, \postcode{610101}, \state{State}, \country{Country}}}

%%==================================%%
%% sample for unstructured abstract %%
%%==================================%%

\abstract{This work proposes a novel tensor train random projection (TTRP) method for dimension reduction, where pairwise distances
can be approximately preserved.
Our TTRP is systematically constructed through a tensor train (TT) representation with TT-ranks equal to one. 
Based on the tensor train format, 
this new random projection method can  speed up the dimension reduction procedure 
for high-dimensional datasets  
and requires less storage costs with little loss in accuracy, compared with existing methods. 
%(e.g., very sparse random projection). 
%, and each entry of the corresponding TT-cores is generated through the Rademacher distribution. 
%which results in efficient projection with small variances. 
We provide a theoretical analysis of the bias and the variance of TTRP, which shows that this approach is an expected isometric projection with bounded variance, and we show that the Rademacher distribution is an optimal choice for generating the corresponding TT-cores. 
Detailed numerical experiments with 
synthetic datasets 
and the MNIST dataset  
are conducted to demonstrate the efficiency of TTRP.}

\keywords{tensor train, random projection, dimension reduction}

%%\pacs[JEL Classification]{D8, H51}

%%\pacs[MSC Classification]{35A01, 65L10, 65L12, 65L20, 65L70}

\maketitle

\section{Introduction}\label{Intro}
%% Section 1: introduction and problem setting
Dimension reduction is a fundamental concept in science and engineering 
for feature extraction and data visualization.  Exploring the properties of low-dimensional structures in high-dimensional spaces attracts 
broad attention. Popular dimension reduction methods include principal component analysis (PCA) \cite{wold1987principal,vms2016gpca}, non-negative matrix factorization (NMF) \cite{sra2006gnmf}, and t-distributed stochastic neighbor embedding (t-SNE) \cite{maaten2008tsne}. 
A main procedure in dimension reduction is to build a linear or nonlinear mapping from a high-dimensional space to a low-dimensional one, which keeps important properties of the high-dimensional space, such as the distance between any two  points \cite{pham2013fast}. 

The random projection (RP) is a widely used method for dimension reduction. It is well-known that the Johnson-Lindenstrauss (JL) transformation \cite{johnson1984extensions,dasgupta2003elementary} can nearly preserve the distance between two points after a random projection $f$, which is typically called isometry property. The isometry property can be used to achieve the nearest neighbor search in high-dimensional datasets \cite{kleinberg1997nns,ailon2006approximatefastjl}. It can also be used to \cite{baraniuk2008simplerip,krahmer2011newjlrip}, where a sparse signal can be reconstructed under a linear random projection \cite{crt2006robust}. 
The JL lemma \cite{johnson1984extensions}	tells us that there exists a nearly 
	isometry mapping $f$, which maps high-dimensional datasets into a lower dimensional space.
	Typically, a choice for the mapping $f$ is the linear random projection
	\begin{equation}\label{RP}
	f(\mathbf{x}) = \frac{1}{\sqrt{M}} \mathbf{Rx},
	\end{equation}
	where $\mathbf{x}\in \mathbb{R}^N$, and $\mathbf{R} \in \mathbb{R}^{M \times N}$ is a matrix whose entries are drawn from the mean zero and variance one Gaussian distribution, denoted by $\mathcal{N}(0,1)$. We call it Gaussian random projection (Gaussian RP). The storage of matrix $\mathbf{R}$ in \eqref{RP} is $O(MN)$ and the cost of computing $\mathbf{Rx}$ in \eqref{RP} is $O(MN)$. However, with large $M$ and $N$, this construction is computationally infeasible. To alleviate the difficulty, the sparse random projection 
method \cite{achlioptas2003database} and the very sparse random projection method \cite{li2006very} are proposed, where the random projection is constructed by a sparse random matrix. Thus the storage and the computational cost can be reduced. 

To be specific, Achlioptas \cite{achlioptas2003database} replaced the dense matrix $\mathbf{R}$ 
by a sparse matrix whose entries follow
	\begin{equation} \label{sparse_distribution}
	\mathbf{R}_{ij}=\sqrt{s} \cdot
	\begin{cases}
	+1, & \text{with probability} \, \frac{1}{2s}, \\
	0, &  \text{with probability} \,1-\frac{1}{s}, \\
	-1, & \text{with probability} \,\frac{1}{2s}.
	\end{cases}
	\end{equation}
This means that the matrix is sampled at a rate of $1/s$. Note that, if $s=1$, %then each element %$\mathbf{R}_{ij}$ follow 
the corresponding distribution is called the 
Rademacher distribution. When $s=3$, the cost of computing $\mathbf{Rx}$ in \eqref{RP} reduces down to a third of the original one but is still $O(MN)$. 
When $s=\sqrt{N}\gg3$, Li et al. \cite{li2006very} called this case as {the very sparse random projection} (Very Sparse RP), which significantly speeds up the computation with little loss in accuracy. It is clear that the storage of very sparse random projection is $O(M\sqrt{N})$. However, the sparse random projection can typically distort a sparse vector \cite{ailon2006approximatefastjl}. To achieve a low-distortion embedding, Ailon and Chazelle \cite{ailon2009fastjl,ailon2006approximatefastjl} proposed  the Fast-Johnson-Lindenstrauss Transform (FJLT), where the preconditioning of a sparse projection matrix with a randomized Fourier transform is employed. 
To reduce randomness and storage requirements, Sun \cite{sun2018tensor} et al. proposed the following format:
	$\mathbf{R}=(\mathbf{R}_1\odot\cdots\odot\mathbf{R}_d)^{\text{T}}$,
	where $\odot$ represents the Khatri-Rao product, $\mathbf{R}_i \in \mathbb{R}^{n_i \times M}$, and $N=\prod_{i=1}^{d} n_i$. Each $\mathbf{R}_i$ is a random matrix whose entries are i.i.d. random variables drawn from $\mathcal{N}(0,1)$. This transformation is called {the Gaussian 
 tensor random projection} (Gaussian TRP) throughout this paper. It is clear that the storage of the Gaussian TRP is $O(M\sum_{i=1}^{d}n_i)$, which is less than that of the Gaussian random projection (Gaussian RP) . For example,  when $\mathbf{N}=n_1n_2=40000$, the storage of Gaussian TRP is only $1/20$ of Gaussian RP. Also, it has been shown that Gaussian TRP satisfies the properties of expected isometry with vanishing variance \cite{sun2018tensor}.
 
Recently, using matrix or tensor decomposition to reduce the storage of projection matrices is proposed in \cite{jinfaster,malik2020guarantees}. The main idea of these methods is to split the projection matrix into some small scale matrices or tensors. 
%In particular, Rakhshan and Rabusseau \cite{pmlr-v108-rakhshan20a} use tensor train (TT) and Canonical polyadic (CP) decompositions to construct projection matrices based on Gaussian random variables.  
In this work, we focus on the low rank tensor train representation to construct the random projection $f$. Tensor decompositions are 
 widely used for data compression \cite{Kolda2009Tensor,Acar2010Scalable,austin2016paralleltd,pham2013fast,ahle2020oblivious,tang2020rank,cui2021deep}.
 %We pay particular attention to 
 The tensor train (TT) decomposition gives the following benefits---low rank TT-formats can provide compact representations of projection 
 matrices and efficient basic linear algebra operations of matrix-by-vector products \cite{oseledets2011tensor}. 
 %This leads to significant reduction in storage and computational costs. 
 Based on these benefits, we propose a novel tensor train random projection (TTRP) method,
 which requires significantly smaller storage and computational costs compared with existing methods (e.g., Gaussian TRP \cite{sun2018tensor}, Very Sparse RP \cite{li2006very} and Gaussian RP \cite{achlioptas2001database}). 
 While constructing projection matrices using   tensor train (TT) and Canonical polyadic (CP) decompositions based on Gaussian random variables is proposed in \cite{rakhshan2020tensorized}, the main contributions of our work are three-fold: first our TTRP is conducted based on a rank-one TT-format, which significantly reduces the storage of projection matrices; second, we provide a novel construction procedure 
 for the rank-one TT-format in our TTRP based on i.i.d.\ Rademacher random variables; 
 third, we prove that our construction of TTRP is unbiased with bounded variance.
 
The rest of the paper is organized as follows. The tensor train format is introduced in section \ref{Talg}. Details of our TTRP approach are introduced in section \ref{TTRP}, where we prove that the approach is an expected isometric projection with bounded variance. In section \ref{Experm}, we demonstrate the efficiency of  TTRP with datasets including synthetic, MNIST. Finally section \ref{Conclu} concludes the paper.
\section{Tensor train format}\label{Talg}
%\subsection{Notation}
	Let lowercase letters $(x)$, boldface lowercase letters ($\mathbf{x}$), boldface capital letters ($\mathbf{X}$), calligraphy letters $(\mathcal{X})$ be scalar, vector, matrix and tensor variables, respectively.
	$\mathbf{x}(i)$ represents the element $i$ of a vector $\mathbf{x}$.
	$\mathbf{X}(i,j)$ means the element $(i,j)$ of a matrix $\mathbf{X}$. The $i$-th row and $j$-th column of a matrix $\mathbf{X}$ is defined by $\mathbf{X}(i,:)$ and $\mathbf{X}(:,j)$, respectively. For a given $d$-th order tensor $\mathcal{X}$, $\mathcal{X}({i_1, i_2, \ldots, i_d})$ is its $({i_1, i_2, \ldots, i_d})$-th component. For a vector $\mathbf{x}\in \mathbb{R}^N$, we denote its $\ell^{p}$ norm as ${\Arrowvert\mathbf{x}\Arrowvert}_p=(\sum_{i=1}^{N}{\lvert\mathbf{x}(i)\rvert}^p)^{\frac{1}{p}}$, for any $p\geq 1$. 
	   The Kronecker product of matrices $\mathbf{A}\in \mathbb{R}^{I \times J}$ and $\mathbf{B}\in \mathbb{R}^{K \times L}$ is denoted by $\mathbf{A}\otimes\mathbf{B}$ of which the result is a matrix of size $(IK)\times (JL)$ and defined by
	\begin{equation*}
	\mathbf{A}\otimes\mathbf{B}=\left[
	\begin{array}{cccc}
	\mathbf{A}(1,1)\mathbf{B} & \mathbf{A}(1,2)\mathbf{B} & \cdots &\mathbf{A}(1,J)\mathbf{B}\\
	\mathbf{A}(2,1)\mathbf{B}& \mathbf{A}(2,2)\mathbf{B} & \cdots&\mathbf{A}(2,J)\mathbf{B}\\
	\vdots & \vdots& \ddots & \vdots\\
	\mathbf{A}(I,1)\mathbf{B}&\mathbf{A}(I,2)\mathbf{B}&\cdots&\mathbf{A}(I,J)\mathbf{B}
	\end{array}
	\right].
	\end{equation*}
	The Kronecker product conforms the following laws \cite{van2000ubiquitous}:
	\begin{equation}\label{multiply}
	(\mathbf{A}\mathbf{C})\otimes(\mathbf{B}\mathbf{D})=(\mathbf{A}\otimes\mathbf{B})(\mathbf{C}\otimes\mathbf{D}),
	\end{equation}
	\begin{equation}\label{distributive}
	(\mathbf{A}+\mathbf{B})\otimes(\mathbf{C}+\mathbf{D})=\mathbf{A}\otimes\mathbf{C}+ \mathbf{A}\otimes\mathbf{D}+\mathbf{B}\otimes\mathbf{C}+\mathbf{B}\otimes\mathbf{D},
	\end{equation}
	\begin{equation}
	    \left(k\mathbf{A}\right)\otimes \mathbf{B}=\mathbf{A}\otimes \left(k \mathbf{B}\right)=k\left(\mathbf{A}\otimes\mathbf{B}\right).
	\end{equation}

\subsection{Tensor train decomposition}
Tensor Train (TT) decomposition \cite{oseledets2011tensor} is a generalization of SVD decomposition from matrices to tensors. TT decomposition provides a compact representation for tensors, and allows for efficient application of linear algebra operations (discussed in section \ref{matrix-vector} and section \ref{basic}).

	Given a $d$-th order tensor $\mathcal{G} \in \mathbb{R}^{n_1 \times \cdots \times n_d}$, the tensor train decomposition \cite{oseledets2011tensor} is 
	\begin{equation}\label{TTformat_ele}
	\mathcal{G}({i_1, i_2, \ldots, i_d}) = \mathcal{G}_1(i_1) \mathcal{G}_2(i_2) \cdots \mathcal{G}_d(i_d),
	\end{equation}
where
%for each $k\in \{1,2,\ldots,d\}$ and $i_k \in \{1, \ldots, n_k\}$, $\mathcal{G}_k(i_k) \in \mathbb{R}^{r_{k-1} \times r_k}$ is a slice of TT-core $\mathcal{G}_k \in \mathbb{R}^{r_{k-1}\times n_k \times r_k}$ with the ``boundary condition'' $r_0 = r_d = 1$.
$\mathcal{G}_k \in \mathbb{R}^{r_{k-1}\times n_k \times r_k}$ are called TT-cores, $\mathcal{G}_k(i_k) \in \mathbb{R}^{r_{k-1} \times r_k}$ is a slice of $\mathcal{G}_k$, for $k=1,2,\ldots,d$, $i_k = 1, \ldots, n_k$, and the ``boundary condition'' is $r_0 = r_d = 1$. The tensor $\mathcal{G}$ is said to be in the TT-format if each element of $\mathcal{G}$  can be represented by \eqref{TTformat_ele}.
The vector $[r_0,r_1,r_2, \ldots, r_d]$ is referred to as TT-ranks. Let $\mathcal{G}_k(\alpha_{k-1}, i_k, \alpha_{k})$ represent the element of $\mathcal{G}_k(i_k)$ in the position $(\alpha_{k-1}, \alpha_k)$. In the index form, the decomposition (\ref{TTformat_ele}) is rewritten as the following TT-format
	\begin{equation}\label{TTformat_core}
	\mathcal{G}(i_1,i_2,\ldots,i_d) = \sum\limits_{\alpha_0,\cdots,\alpha_d}{\mathcal{G}_1(\alpha_0,i_1,\alpha_1)\mathcal{G}_2(\alpha_1,i_2,\alpha_2)\cdots \mathcal{G}_d(\alpha_{d-1},i_d,\alpha_d)}. 
	\end{equation}
	
	 To look more closely to \eqref{TTformat_ele}, an element $\mathcal{G}(i_1, i_2, \ldots, i_d)$ is represented by a sequence of matrix-by-vector products.
	Figure \ref{ttformat_fig} illustrates the tensor train decomposition. It can be seen that the key ingredient in tensor train (TT) decomposition is the TT-ranks. The TT-format only  uses $O(ndr^2)$ memory to $O(n^d)$ elements, where $n = \text{max} \ \{n_1, \ldots, n_d\}$ and $r = \text{max}\ \{r_0, r_1, \ldots, r_d\}$. Although the storage reduction is efficient only if the TT-rank is small, tensors in data science and machine learning typically have low TT-ranks. Moreover, one can apply the TT-format to basic linear algebra operations, such as matrix-by-vector products, scalar multiplications, etc. This can reduce the computational cost significantly when the data have low rank structures  (see \cite{oseledets2011tensor} for details). 
	\begin{figure}
	    \centering
	    \includegraphics[width=.9\textwidth]{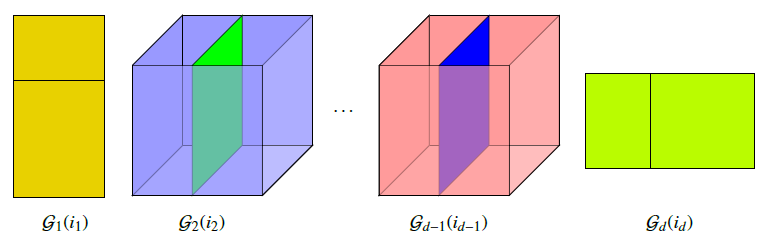}
	    \caption{Tensor train format (TT-format): extract an element $\mathcal{G}({i_1, i_2, \ldots, i_d})$ via a sequence of matrix-by-vector products.}
	    \label{ttformat_fig}
	\end{figure}

	\subsection{Tensorizing matrix-by-vector products}\label{matrix-vector}
	
	The tensor train format gives a compact representation of matrices and efficient computation for matrix-by-vector products.
	%To see this, 
	We first review the TT-format of large matrices and vectors following \cite{oseledets2011tensor}. Defining two bijections $\nu: \mathbb{N} \mapsto \mathbb{N}^{d}$ and $\mu: \mathbb{N} \mapsto \mathbb{N}^{d}$, a pair index $(i, j) \in \mathbb{N}^{2}$ is mapped to a multi-index pair $(\nu (i), \mu (j)) = (i_1. i_2, \ldots, i_d, j_1, j_2, \ldots, j_d)$. 
Then a matrix $\mathbf{R} \in \mathbb{R}^{M \times N}$ and a vector $\mathbf{x} \in \mathbb{R}^{N}$ can be 
tensorized in the TT-format as follows. 
Letting $M = \prod_{i=1}^d {m_k}$ and $N = \prod_{i=1}^d {n_k}$, an element $(i,j)$ of $\mathbf{R}$ can be written as 
(see \cite{novikov2015tensorizing,oseledets2011tensor})
	\begin{equation} \label{eq_tensorizeR}
	\mathbf{R}(i,j) = \mathcal{R}(\nu (i), \mu (j)) = \mathcal{R}(i_1,\dots,i_d,j_1,\dots, j_d)=  \mathcal{R}_1(i_1,j_1)\cdots\mathcal{R}_d(i_d,j_d),
	\end{equation} 
	and an element $j$ of $\mathbf{x}$ can be written as 
	\begin{equation} \label{eq_tensorizeX}
	\mathbf{x}(j) = \mathcal{X}(\mu (j)) = \mathcal{X}(j_1,\dots,j_d)=\mathcal{X}_1(j_1)\cdots\mathcal{X}_d(j_d),
	\end{equation}
	where $\mathcal{R}_k(i_k,j_k) \in \mathbb{R}^{r_{k-1} \times r_k},\,\mathcal{X}_k(j_k)\in \mathbb{R}^{\hat{r}_{k-1} \times \hat{r}_k},\,r_0=\hat{r}_0=r_d=\hat{r}_d=1$, for $k=1,\dots,d$, $(i_1,\dots i_d)$ enumerate the rows of $\mathbf{R}$,
	and $(j_1,\dots, j_d)$ enumerate the columns of $\mathbf{R}$. We consider the matrix-by-vector product ($\mathbf{y}=\mathbf{R}\mathbf{x}$), and each element of $\mathbf{y}$ can be tensorized in the TT-format as
	\begin{equation}\label{ycore}
	\begin{aligned}
	\mathbf{y}(i)=\mathcal{Y}(i_1,\dots,i_d)=&\sum_{j_1,\dots,j_d} \mathcal{R}(i_1,\dots,i_d,j_1,\dots, j_d)\mathcal{X}(j_1,\dots,j_d)\\
	=&\sum_{j_1,\dots,j_d}\Big({\mathcal{R}_1(i_1,j_1)}\cdots\mathcal{R}_d(i_d,j_d)\Big)\Big(\mathcal{X}_1(j_1)\cdots\mathcal{X}_d(j_d)\Big)\\
	=&\sum_{j_1,\dots,j_d}\underbrace{\Big(\mathcal{R}_1(i_1,j_1)\otimes \mathcal{X}_1(j_1)\Big)}_{O(r_0r_1\hat{r}_0\hat{r}_1)}\cdots\underbrace{\Big(\mathcal{R}_d(i_d,j_d)\otimes \mathcal{X}_d(j_d)\Big)}_{O(r_{d-1}r_d\hat{r}_{d-1}\hat{r}_d)}\\
	=&\underbrace{\mathcal{Y}_1(i_1)}_{O(n_1r_0r_1\hat{r}_0\hat{r}_1)}\cdots\underbrace{\mathcal{Y}_d(i_d)}_{O(n_dr_{d-1}r_d\hat{r}_{d-1}\hat{r}_d)},
	\end{aligned}
	\end{equation}
where $\mathcal{Y}_k(i_k)=\sum_{j_k}\mathcal{R}_k(i_k,j_k)\otimes \mathcal{X}_k(j_k)\in \mathbb{R}^{r_{k-1}\hat{r}_{k-1}\times r_k\hat{r}_k}$, for $k=1,\dots,d$. The complexity of computing each TT-core $\mathcal{Y}_k \in \mathbb{R}^{r_{k-1}\hat{r}_{k-1}\times m_k\times r_k \hat{r}_k}$, is $O(m_kn_k r_{k-1}r_k \hat{r}_{k-1}\hat{r}_k)$ for $k=1,\dots,d$.
Assuming that the TT-cores of $\mathbf{x}$ are known, the total cost of the matrix-by-vector product ($\mathbf{y}=\mathbf{R}\mathbf{x}$) in the TT-format can reduce significantly from the original complexity $O(MN)$ to $O(dmnr^2\hat{r}^2),\,m=\max\{m_1,m_2,\dots,m_d\}$, $n=\max\{n_1,n_2,\dots,n_d\},\,r=\text{max}\ \{r_0, r_1, \ldots, r_d\},\,\hat{r}=\text{max}\ \{\hat{r}_0, \hat{r}_1, \ldots, \hat{r}_d\}$, where $N$ is typically large and $r$ is small. When $m_k=n_k,\,r_k=\hat{r}_k$, for $k=1,\dots,d$, the cost of such matrix-by-vector product in the TT-format is $O(dn^2r^4)$ \cite{oseledets2011tensor}. Note that, in the case that $r$ equals one, the cost of such matrix-by-vector product in the TT-format is $O(dmn\hat{r}^2)$.
%Moreover, the cost of \eqref{vec}  in Table \ref{ycompute} is $O(m^dr^2\hat{r}^2)$. 

\subsection{Basic Operations in the TT-format}\label{basic}
  In section \ref{matrix-vector}, the product of matrix $\mathbf{R}$ and vector $\mathbf{x}$ which are both in the TT-format, is conducted efficiently.
  %is also a vector $\mathbf{y}$ in the TT-format.
  In the TT-format, many important operations can be readily implemented. For instance, computing  the Euclidean distance between two vectors in the TT-format is more efficient with less storage than directly computing the Euclidean distance in standard matrix and vector formats. In the following, some important operations in the TT-format are discussed. 
  
 The subtraction of tensor $\mathcal{Y}\in\mathbb{R}^{m_1\times\cdots \times m_d }$ and tensor $\hat{\mathcal{Y}}\in\mathbb{R}^{m_1\times\cdots \times m_d }$ in the TT-format is
 \begin{equation} \label{add}
     \begin{aligned}
     \mathcal{Z}(i_1,\dots,i_d)
     &:=\mathcal{Y}(i_1,\dots,i_d)-\hat{\mathcal{Y}}(i_1,\dots,i_d)\\
     &=\mathcal{Y}_1(i_1)\mathcal{Y}_2(i_2)\cdots\mathcal{Y}_d(i_d)-\hat{\mathcal{Y}}_1(i_1)\hat{\mathcal{Y}}_2(i_2)\cdots\hat{\mathcal{Y}}_d(i_d)\\
     &=\mathcal{Z}_1(i_1)\mathcal{Z}_2(i_2)\cdots\mathcal{Z}_d(i_d),
 \end{aligned}
 \end{equation}
 where 
 \begin{equation}
    \mathcal{Z}_{k}\left(i_{k}\right)=\left(\begin{array}{cc}
\mathcal{Y}_{k}\left(i_{k}\right) & 0 \\
0 & \hat{\mathcal{Y}}_{k}\left(i_{k}\right)
\end{array}\right), \quad k=2, \ldots, d-1, 
 \end{equation}
and
\begin{equation}
   \mathcal{Z}_{1}\left(i_{1}\right)=\left(\begin{array}{cc}
\mathcal{Y}_{1}\left(i_{1}\right) & -\hat{\mathcal{Y}}_{1}\left(i_{1}\right)
\end{array}\right), \quad \mathcal{Z}_{d}\left(i_{d}\right)=\left(\begin{array}{c}
\mathcal{Y}_{d}\left(i_{d}\right) \\
\hat{\mathcal{Y}}_{d}\left(i_{d}\right)
\end{array}\right), 
\end{equation}
and TT-ranks of $\mathcal{Z}$ equal the sum of TT-ranks of $\mathcal{Y}$ and $\hat{\mathcal{Y}}$.

The dot product of 
tensor $\mathcal{Y}$ and tensor $\hat{\mathcal{Y}}$ in the TT-format \cite{oseledets2011tensor} is
\begin{equation}\label{dot1}
\begin{aligned}
\langle\mathcal{Y}, \hat{\mathcal{Y}}\rangle &:=\sum_{i_{1}, \ldots, i_{d}} \mathcal{Y}\left(i_{1}, \ldots, i_{d}\right) \hat{\mathcal{Y}}\left(i_{1}, \ldots, i_{d}\right)\\
     &=\sum_{i_{1}, \ldots, i_{d}}\Big(\mathcal{Y}_1(i_1)\mathcal{Y}_2(i_2)\cdots\mathcal{Y}_d(i_d)\Big)\Big(\hat{\mathcal{Y}}_1(i_1)\hat{\mathcal{Y}}_2(i_2)\cdots\hat{\mathcal{Y}}_d(i_d)\Big)\\
     &=\sum_{i_{1}, \ldots, i_{d}}\Big(\mathcal{Y}_1\left(i_1\right)\mathcal{Y}_2\left(i_2\right)\cdots\mathcal{Y}_d(i_d)\Big)\otimes\Big(\hat{\mathcal{Y}}_1(i_1)\hat{\mathcal{Y}}_2(i_2)\cdots\hat{\mathcal{Y}}_d(i_d)\Big)\\
     &=\left(\sum_{i_{1}}\mathcal{Y}_{1}\left(i_{1}\right) \otimes \hat{\mathcal{Y}}_{1}\left(i_{1}\right)\right)\left(\sum_{i_{2}}\mathcal{Y}_{2}\left(i_{2}\right) \otimes \hat{\mathcal{Y}}_{2}\left(i_{2}\right)\right) \ldots\left(\sum_{i_{d}}\mathcal{Y}_{d}\left(i_{d}\right) \otimes \hat{\mathcal{Y}}_{d}\left(i_{d}\right)\right)\\
     &=\mathbf{V}_1\mathbf{V}_2\cdots\mathbf{V}_d,
\end{aligned}
\end{equation}
where \begin{equation}\label{dot2}
    \mathbf{V}_k=\sum_{i_{k}}\mathcal{Y}_{k}\left(i_{k}\right) \otimes \hat{\mathcal{Y}}_{k}\left(i_{k}\right), \quad k=1, \ldots, d.
\end{equation}
 Since $\mathbf{V}_1,\mathbf{V}_d$ are vectors and $\mathbf{V}_2,\dots,\mathbf{V}_{d-1}$ are matrices, we compute $\langle\mathcal{Y}, \hat{\mathcal{Y}}\rangle$ by a sequence of matrix-by-vector products:
\begin{align}
  \mathbf{v_{1}}&=\mathbf{V}_{1}, \label{v1}\\
    \mathbf{v_{k}}=\mathbf{v_{k-1}} \mathbf{V_k}=\mathbf{v_{k-1}} \sum_{i_{k}} \mathcal{Y}_{k}\left(i_{k}\right) \otimes \hat{\mathcal{Y}}_{k}\left(i_{k}\right)&=\sum_{i_{k}} \mathbf{p_{k}}\left(i_{k}\right), \quad k=2, \ldots, d,  \label{second}
\end{align}
where
\begin{align}\label{first}
    \mathbf{p_{k}}\left(i_{k}\right)=\mathbf{v_{k-1}}\left(\mathcal{Y}_{k}\left(i_{k}\right) \otimes \hat{\mathcal{Y}}_{k}\left(i_{k}\right)\right),
\end{align}
and we obtain
\begin{equation}
    \langle\mathcal{Y}, \hat{\mathcal{Y}}\rangle=\mathbf{v_d}.
\end{equation}
For simplify we assume that TT-ranks of $\mathcal{Y}$ are the same as that of $\hat{\mathcal{Y}}$.
In \eqref{first}, let $\mathbf{B}:=\mathcal{Y}_k(i_k)\in \mathbb{R}^{r\times r},\,\mathbf{C}:=\hat{\mathcal{Y}}_k(i_k)\in \mathbb{R}^{r\times r},\,\mathbf{x}:=\mathbf{v_{k-1}}\in \mathbb{R}^{1\times r^2},\,\mathbf{y}:= \mathbf{p_{k}\left(i_{k}\right)}\in \mathbb{R}^{1\times r^2}$, for $k=2,\dots,d-1$, and we use the reshaping Kronecker product expressions \cite{golub2013matrix} for \eqref{first}:
$$\mathbf{y}=\mathbf{x}(\mathbf{B}\otimes\mathbf{C})\quad \Longleftrightarrow \quad \mathbf{Y}=\mathbf{C}^{T}\mathbf{X}\mathbf{B},$$
where we reshape $\mathbf{x},\,\mathbf{y}$ into $X=\left[\begin{array}{llll}\mathbf{x}_{1} & \mathbf{x}_{2} & \cdots & \mathbf{x}_{r}\end{array}\right] \in \mathbb{R}^{r \times r}$, $ Y=\left[\begin{array}{llll}\mathbf{y}_{1} & \mathbf{y}_{2} & \cdots & \mathbf{y}_{r}\end{array}\right] \in \mathbb{R}^{r \times r}$ respectively. Note that the cost of computing $\mathbf{Y}=\mathbf{C}^{T}\mathbf{X}\mathbf{B}$ is $O(r^3)$ while the disregard of Kronecker structure of $\mathbf{y}=\mathbf{x}(\mathbf{B}\otimes\mathbf{C})$ leads to an $O(r^4)$ calculation. Hence the complexity of computing $\mathbf{p_{k}\left(i_{k}\right)}$ in \eqref{first} is $O(r^3)$, because of the efficient Kronecker product computation. Then the cost of computing $\mathbf{v_k}$ in \eqref{second}  is $O(mr^3)$, and the total cost of the dot product $\langle\mathcal{Y}, \hat{\mathcal{Y}}\rangle$ is $O(dmr^3)$. 

The Frobenius norm of a tensor $\mathcal{Y}$ is defined by
$$
\norm{\mathcal{Y}}{F}=\sqrt{\langle\mathcal{Y}, \mathcal{Y}\rangle}.
$$
Computing the distance between tensor $\mathcal{Y}$ and tensor $\hat{\mathcal{Y}}$ in the TT-format is computationally efficient by applying the dot product \eqref{dot1}--\eqref{dot2},
\begin{equation} \label{dis}
     \norm{\mathcal{Y}-\hat{\mathcal{Y}}}{F}=\sqrt{\langle \mathcal{Y}-\hat{\mathcal{Y}}, \mathcal{Y}-\hat{\mathcal{Y}}\rangle}.
\end{equation}
The complexity of computing the distance is also $O(dmr^3)$. Algorithm \ref{dot} gives more details about computing \eqref{dis} based on Frobenius norm $\norm{\mathcal{Y}-\hat{\mathcal{Y}}}{F}$.
\begin{algorithm}[H]
	\caption{Distance based on Frobenius Norm $W:=\norm{\mathcal{Y}-\hat{\mathcal{Y}}}{F}=\sqrt{\langle \mathcal{Y}-\hat{\mathcal{Y}}, \mathcal{Y}-\hat{\mathcal{Y}}\rangle}$}
	\label{dot}
	\begin{algorithmic}[1]
		\Require TT-cores $\mathcal{Y}_k$ of tensor $\mathcal{Y}$ and TT-cores $\hat{\mathcal{Y}}_k$ of tensor $\hat{\mathcal{Y}}$, for $k=1,\dots,d$.
		\State Compute $\mathcal{Z}:=\mathcal{Y}-\hat{\mathcal{Y}}.$ 	$\qquad \qquad \qquad \qquad \qquad \qquad \qquad \triangleright \ O(mr)$ by \eqref{add}
		\State Compute $\mathbf{v_1}:=\sum_{i_{1}}\mathcal{Z}_{1}\left(i_{1}\right) \otimes \mathcal{Z}_{1}\left(i_{1}\right)$.$\qquad \qquad \qquad \qquad \quad \triangleright \ O(mr^2)$ by \eqref{v1}
		\For {$k = 2:d-1$}
		\State Compute $\mathbf{p_{k}}\left(i_{k}\right)=\mathbf{v_{k-1}}\Big(\mathcal{Z}_{k}(i_{k}) \otimes \mathcal{Z}_{k}(i_{k})\Big)$. $\qquad \qquad  \quad \triangleright \ O(r^3)$ by \eqref{first}
		\State Compute $\mathbf{v_{k}}:=\sum_{i_{k}} \mathbf{p_{k}}\left(i_{k}\right)$. $\qquad \qquad \qquad \qquad \qquad \qquad \triangleright \ O(mr^3)$ by \eqref{second}
		\EndFor
		\State Compute $\mathbf{p_{d}}\left(i_{d}\right)=\mathbf{v_{d-1}}\Big(\mathcal{Z}_{d}(i_{d}) \otimes \mathcal{Z}_{d}\left(i_{d}\right)\Big)$. $\qquad \qquad  \qquad  \triangleright \ O(r^2)$ by \eqref{first}
		\State Compute $\mathbf{v_{d}}:=\sum_{i_{d}} \mathbf{p_{d}}(i_{d})$. $\qquad \qquad \qquad \qquad \qquad \qquad \quad  \triangleright \ O(mr^2)$ by \eqref{second}
		\Ensure Distance $W:=\sqrt{\langle \mathcal{Y}-\hat{\mathcal{Y}}, \mathcal{Y}-\hat{\mathcal{Y}}\rangle}=\sqrt{\mathbf{v_d}}$.
	\end{algorithmic}
\end{algorithm}

In summary, just merging the cores of two tensors in the TT-format can perform the subtraction of two tensors instead of directly subtraction of two tensors in standard tensor format. A sequence of matrix-by-vector products can achieve the dot product of two tensors in the TT-format. The cost of computing the distance between two tensors in the TT-format, reduces from the original complexity $O(M )$
to $O(dmr^3)$, where $M=\prod_{i=1}^{d}m_i,\,r\ll M $.

\section{Tensor train random projection}\label{TTRP}
Due to the computational efficiency of TT-format discussed above, 
we consider the TT-format to construct projection matrices. Our tensor train random projection is defined as follows.
\begin{definition}\label{defTTRP}
(Tensor Train Random Projection). For a data point $\mathbf{x} \in \dsR^{N}$, our tensor train random projection (TTRP) is 
\begin{equation} \label{eq_ttfpfull}
	f_{TTRP}(\mathbf{x}):= \frac{1}{\sqrt{M}}\mathbf{R} \mathbf{x},
	\end{equation}
where the tensorized versions (through 
the TT-format) of 
$\mathbf{R}$ and $\mathbf{x}$ are denoted by $\mathcal{R}$ and 
$\mathcal{X}$ (see \eqref{eq_tensorizeR}-\eqref{eq_tensorizeX}),
the corresponding TT-cores
are denoted by $\{\mathcal{R}_k \in \dsR^{r_{k-1}\times m_k\times n_k \times r_k}\}^d_{k=1}$ and $\{\mathcal{X}_k\in \mathbb{R}^{\hat{r}_{k-1}\times n_k \times \hat{r}_k}\}^d_{k=1}$ respectively, we set  $r_0=r_1=\ldots=r_d=1$,
%$\{\mathcal{R}_k(i_k,j_k) \in \dsR^{r_{k-1} \times r_k}\}^d_{k=1}$ and $\{\mathcal{X}_k(j_k)\in \mathbb{R}^{\hat{r}_{k-1} \times \hat{r}_k}\}^d_{k=1}$
and $\mathbf{y}:=\mathbf{R}\mathbf{x}$ is specified by \eqref{ycore}. 
\end{definition}
%where $\mathbf{R}$ and $\mathbf{x}$ are tensorized with TT-cores $\mathcal{R}_k(i_k,j_k) \in \dsR^{r_{k-1} \times r_k}$ and $\mathcal{X}_k(j_k)\in \mathbb{R}^{\hat{r}_{k-1} \times \hat{r}_k}$ respectively for $k=1,\dots,d$ and $r_0=r_1=\ldots=r_d=1$,
%the tensorized verion of 
%$\mathbf{R}$ is denoted by
%$\mathcal{R}$ (see \eqref{eq_tensorizeR})
%and
%the tensorized verion of 
%$\mathbf{x}$ is denoted by
%$\mathcal{X}$, and $\mathbf{y}:=\mathbf{R}\mathbf{x}$ is specified by \eqref{ycore}. 
Note that our TTRP is based on the tensorized version of $\mathbf{R}$ with TT-ranks all equal to one, 
which leads to significant computational efficiency and small storage costs,  and comparisons for TTRP associated with different TT-ranks are conducted in section \ref{Experm}.
%In order to minimize the storage and the computational costs, we develop our TTRP based on tensorized matrix $\mathbf{R}$ with TT-ranks all equal to one. 
When $r_0=r_1=\ldots=r_d=1$, 
all TT-cores $\mathcal{R}_i$, for $i=1,\dots,d$ in \eqref{eq_tensorizeR} become matrices and the cost of computing $\mathbf{Rx}$ in TTRP \eqref{eq_ttfpfull} is $O(dmn\hat{r}^2)$ (see section \ref{matrix-vector}), where $m=\max\{m_1,m_2,\dots,m_d\}$, $n=\max\{n_1,n_2,\dots,n_d\}$ and $\hat{r}=\max\{\hat{r}_0, \hat{r}_1, \ldots, \hat{r}_d\}$. 
Moreover, from our analysis in the latter part of this section, we find that the Rademacher distribution introduced in section \ref{Intro} is an optimal choice for generating
the TT-cores of $\mathbf{R}$.
In the following, 
we prove that TTRP established by \eqref{eq_ttfpfull} is an expected isometric projection with bounded variance. 
%Also, TTRP is a JL transformation with high probability. 
%\begin{algorithm}[H]
 %   \caption{MutipleKronecker}
 %   \label{alg_kron}
  %  \begin{algorithmic}[1]
  %     \Require $\mathbf{R}_i\in \mathbb{R}^{m_i\times n_i},\,i=1,..,d,\,\mathbf{x},\,M$, and initialize $k=1$.
   %    \Ensure $f_{TTRP}(\mathbf{x})$.
   %    \Function {MutipleKroneker}{$\mathbf{R}_1,\dots,\mathbf{R}_d, \mathbf{x}, k$}
      %          \State $\mathbf{X} = reshape(\mathbf{x},n_{k+1}\dotsn_d,n_k)$.
       %         \If {$k==d-1$}
          %          \State $\mathbf{y}=vec(\mathbf{R}_{k+1}\mathbf{X}\mathbf{R}_k^T)$. 
               %     $\qquad \qquad \qquad \qquad \quad \quad \triangleright \ O(m_{k+1}n_{k+1}n_{k}+m_{k+1}n_k %m_k)$
   %             \Else
   %                     \State $\mathbf{X} = \mathbf{X}\mathbf{R}_k^T$.
  %                      $\qquad \qquad \qquad \qquad \qquad \qquad \qquad  \qquad \qquad \qquad  \triangleright %\ O(n_k\dotsn_d m_k)$
 %                       \For{$i = 1 : m_k$}
  %                           \State $k=k+1$.
 %                            \State $\mathbf{Y}(:,i) = $ \Call{MutipleKroneker}{$\mathbf{R}_1,\dots,\mathbf{R}_d, %\mathbf{X}(:,i), k$}.
%                        \EndFor
%                        \State $\mathbf{y}=vec(\mathbf{Y})$.
 %               \EndIf
 %               \State \Return{$\frac{1}{\sqrt{M}}\mathbf{y}$}.
 %      \EndFunction
 %   \end{algorithmic}
%\end{algorithm}

% Our main results are listed below.
\begin{theorem}\label{lemma_mean}
    	%\normalfont
		Given a vector $\mathbf{x}\in\mathbb{R}^{\prod_{j=1}^{d} n_j}$, if $\mathbf{R}$ in \eqref{eq_ttfpfull} is composed of $d$ independent TT-cores $\mathcal{R}_1,\dots,\mathcal{R}_d$, whose entries are independent and identically random variables with mean zero and variance one, then the following equation holds
		\begin{equation*}
		\mathbb{E}{\Arrowvert f_{TTRP}(\mathbf{x})\Arrowvert}^{2}_2={\Arrowvert \mathbf{x} \Arrowvert}^2_2.
		\end{equation*}
\end{theorem}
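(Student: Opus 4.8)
The plan is to prove the identity by a direct second-moment computation: expand the squared Euclidean norm into a double sum over pairs of column multi-indices, and exploit the rank-one tensor train structure together with the independence of the TT-cores to collapse the sum to the diagonal. Since all TT-ranks equal one, each slice $\mathcal{R}_k(i_k,j_k)$ is a scalar, so by \eqref{eq_tensorizeR} the matrix entry factorizes as $\mathbf{R}(i,j)=\prod_{k=1}^{d}\mathcal{R}_k(i_k,j_k)$, where $i\leftrightarrow(i_1,\dots,i_d)$ and $j\leftrightarrow(j_1,\dots,j_d)$ are the multi-indices attached to the bijections $\nu,\mu$.

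First I would write
$$\mathbb{E}\|f_{TTRP}(\mathbf{x})\|_2^2=\frac{1}{M}\,\mathbb{E}\|\mathbf{R}\mathbf{x}\|_2^2=\frac{1}{M}\sum_{i}\mathbb{E}\Big(\sum_{j}\mathbf{R}(i,j)\,\mathbf{x}(j)\Big)^2,$$
and then expand the square over an independent pair of column multi-indices $j,j'$ to obtain
$$\mathbb{E}\|\mathbf{R}\mathbf{x}\|_2^2=\sum_{i}\sum_{j,j'}\mathbf{x}(j)\,\mathbf{x}(j')\,\mathbb{E}\Big[\prod_{k=1}^{d}\mathcal{R}_k(i_k,j_k)\,\mathcal{R}_k(i_k,j'_k)\Big].$$

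Next I would invoke the mutual independence of the TT-cores $\mathcal{R}_1,\dots,\mathcal{R}_d$ to factor the expectation as $\prod_{k=1}^{d}\mathbb{E}\big[\mathcal{R}_k(i_k,j_k)\mathcal{R}_k(i_k,j'_k)\big]$. For each fixed $k$ and fixed row index $i_k$, the two scalars are entries of the same core in row $i_k$ and columns $j_k,j'_k$. If $j_k=j'_k$ the expectation is the second moment of a single entry, which equals the variance one since the mean is zero; if $j_k\neq j'_k$ the two entries are distinct, hence independent, and the expectation factors into a product of two zero means. Thus $\mathbb{E}[\mathcal{R}_k(i_k,j_k)\mathcal{R}_k(i_k,j'_k)]=\delta_{j_k j'_k}$, and the full product equals $\prod_{k=1}^{d}\delta_{j_k j'_k}$, which is $1$ precisely when $j=j'$ and $0$ otherwise.

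Substituting this back collapses the double sum to the diagonal $j=j'$, yielding $\sum_{i}\sum_{j}\mathbf{x}(j)^2=M\,\|\mathbf{x}\|_2^2$, because the row multi-index $i$ ranges over $M=\prod_{k}m_k$ values; dividing by $M$ gives the claim. I expect the only delicate point to be the index bookkeeping: one must keep the row multi-index $i$ frozen while expanding the square, and argue carefully that for each fixed $k$ it is the distinctness of the \emph{column} indices $j_k\neq j'_k$ (not of the full multi-indices) that makes the relevant core entries independent. No moment assumption beyond mean zero and variance one is used, so the argument applies to any such distribution, including the Rademacher choice.
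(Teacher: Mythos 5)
Your proof is correct and follows essentially the same route as the paper's: factor the expectation across the independent cores and use $\mathbb{E}[\mathcal{R}_k(i_k,j_k)\mathcal{R}_k(i_k,j'_k)]=\delta_{j_k j'_k}$ to kill the off-diagonal terms. The only difference is notational --- you work with the flattened entries $\mathbf{x}(j)$ directly, whereas the paper carries the TT-cores $\mathcal{X}_k(j_k)$ of $\mathbf{x}$ through the computation via Kronecker products, which yields the same cancellation in equations \eqref{yk3}--\eqref{yk5}.
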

	\begin{proof}
%From \eqref{eq_ttrp}, the projection matrix is represented as a Kronecker product of $d$ matrices,
%\[
%			\mathbf{R}= \mathbf{R}_1\otimes\mathbf{R}_2\otimes\cdots\otimes\mathbf{R}_d,
%			\]
%			where $\mathbf{R}_1,\dots,\mathbf{R}_d$ are independent matrices. 
Denoting $\mathbf{y}=\mathbf{R}\mathbf{x}$ gives
\begin{align}
    \mathbb{E}{\Arrowvert f_{TTRP}(\mathbf{x})\Arrowvert}^2_2=\frac{1}{M}\mathbb{E}{\Arrowvert \mathbf{y} \Arrowvert}^2_2=\frac{1}{M}\mathbb{E}\left[\sum_{i=1}^M \mathbf{y}^2(i)\right]=\frac{1}{M}\mathbb{E}\left[\sum_{i_1,\dots,i_d} \mathcal{Y}^2(i_1,\dots,i_d)\right] . \label{yintial}
\end{align}
			By the TT-format, 
			$\mathcal{Y}(i_1,\dots,i_d)=\mathcal{Y}_1(i_1)\cdots\mathcal{Y}_d(i_d)$, where $\mathcal{Y}_k(i_k)=\sum_{j_k}\mathcal{R}_k(i_k,j_k)\otimes \mathcal{X}_k(j_k)$, for $k=1,\dots,d$, it follows that
			\begin{align}
				\mathbb{E}\left[\mathcal{Y}^2(i_1,\dots,i_d)\right]&=\mathbb{E}\left[\left(\mathcal{Y}_1(i_1)\cdots\mathcal{Y}_d(i_d)\Big)\Big(\mathcal{Y}_1(i_1)\cdots\mathcal{Y}_d(i_d)\right)\right] \nonumber\\
				&=\mathbb{E}\left[\Big(\mathcal{Y}_1(i_1)\cdots\mathcal{Y}_d(i_d)\Big)\otimes\Big(\mathcal{Y}_1(i_1)\cdots\mathcal{Y}_d(i_d)\Big)\right] \label{y2}\\
				&=\mathbb{E}\left[\Big(\mathcal{Y}_1(i_1)\otimes\mathcal{Y}_1(i_1)\Big)\cdots\Big(\mathcal{Y}_d(i_d)\otimes\mathcal{Y}_d(i_d)\Big)\right] \label{y3}\\
				&=\mathbb{E}\Big[\mathcal{Y}_1(i_1)\otimes\mathcal{Y}_1(i_1)\Big]\cdots\mathbb{E}\Big[\mathcal{Y}_d(i_d)\otimes\mathcal{Y}_d(i_d)\Big], \label{yy}
			\end{align}
			where \eqref{y3} is derived using  \eqref{multiply} and \eqref{y2}, and then combining \eqref{y3}
			and using the independence of TT-cores $\mathcal{R}_1,\dots,\mathcal{R}_d$ give \eqref{yy}.
% 			Note that
% 			\begin{align}\label{yy}
% 			\mathbb{E}\left[\mathcal{Y}^2(i_1,\dots,i_d)\right]=\mathbb{E}\Big[\mathcal{Y}_1(i_1)\otimes\mathcal{Y}_1(i_1)\Big]\cdots\mathbb{E}\Big[\mathcal{Y}_d(i_d)\otimes\mathcal{Y}_d(i_d)\Big],
% 			\end{align}
%			where the TT-cores $\mathbf{R}_1,\dots,\mathbf{R}_d$ are independent matrices because the TT-ranks of tensorized matrix $\mathbf{R}$ equal to one, 

The $k$-th term of the right hand side of \eqref{yy}, for $k=1,\dots,d$, can be computed by
			\begin{align}
				\mathbb{E}\Big[\mathcal{Y}_k(i_k)\otimes\mathcal{Y}_k(i_k)\Big]  &=\mathbb{E}\Bigg[\Big[\sum_{j_k}\mathcal{R}_k(i_k,j_k)\otimes\mathcal{X}_k(j_k)\Big]\otimes\Big[\sum_{j_k}\mathcal{R}_k(i_k,j_k)\otimes\mathcal{X}_k(j_k)\Big]\Bigg] \label{yk1}\\
				&=\mathbb{E}\Bigg[\Big[\sum_{j_k}\mathcal{R}_k(i_k,j_k)\mathcal{X}_k(j_k)\Big]\otimes\Big[\sum_{j_k}\mathcal{R}_k(i_k,j_k)\mathcal{X}_k(j_k)\Big]\Bigg]\label{yk2}\\
				&=\sum_{j_k,j'_k}\mathbb{E}\Big[\mathcal{R}_k(i_k,j_k)\mathcal{R}_k(i_k,j'_k)\Big]\mathcal{X}_k(j_k)\otimes\mathcal{X}_k(j'_k) \label{yk3}\\
				&=\sum_{j_k}\mathbb{E}\Big[\mathcal{R}^2_k(i_k,j_k)\Big]\mathcal{X}_k(j_k)\otimes\mathcal{X}_k(j_k) \label{yk4}\\
				&=\sum_{j_k}\mathcal{X}_k(j_k)\otimes\mathcal{X}_k(j_k). \label{yk5}
\end{align}
Here as we set the TT-ranks
of $\mathcal{R}$ to be one,   $\mathcal{R}_k(i_k,j_k)$ is  scalar, and  \eqref{yk1} then
leads to \eqref{yk2}. 
Using \eqref{distributive} and \eqref{yk2} gives \eqref{yk3}, and we derive \eqref{yk5} from \eqref{yk3} by the assumption that $\mathbb{E}\Big[\mathcal{R}^2_k (i_k,j_k)\Big]=1$ and $\mathbb{E}\Big[\mathcal{R}_k(i_k,j_k)\mathcal{R}_k(i_k,j'_k)\Big]=0$, for $ j_k, j'_k=1,\dots,n_k$, $j_k\neq j'_k$, $k=1,\dots,d$.
			%Since $\mathcal{R}_k (i_k,j_k)$ are independent random variables with mean zero and variance one, we have 
% 			Since $\mathbb{E}\Big[\mathcal{R}^2_k (i_k,j_k)\Big]=1$ and $\mathbb{E}\Big[\mathcal{R}_k(i_k,j_k)\mathcal{R}_k(i_k,j'_k)\Big]=0$ for $ j_k, j'_k=1,\dotsn_k$, $k=1,2\dots,d$, and $j_k\neq j'_k$, we derive \eqref{yk5} from \eqref{yk3}.
% 			Then it follows that
% 				\begin{align*}
% 			\mathbb{E}\Big[\mathcal{Y}_1(i_1)\otimes\mathcal{Y}_1(i_1)\Big]
% 			&=\sum_{j_1}\mathbb{E}\Big[\mathcal{R}^2_1(i_1,j_1)\Big]\mathcal{X}_1(j_1)\otimes\mathcal{X}_1(j_1)\\
% 				&=\sum_{j_1}\mathcal{X}_1(j_1)\otimes\mathcal{X}_1(j_1).
% 			\end{align*}
% 			Similarly, we have
% 			\[\mathbb{E}\Big[\mathcal{Y}_k(i_k)\otimes\mathcal{Y}_k(i_k)\Big]=\sum_{j_k}\mathcal{X}_k(j_k)\otimes\mathcal{X}_k(j_k), \text{for} \ k=1, \ldots ,d. \]

Substituting \eqref{yk5} into \eqref{yy} gives
			\begin{align}
				\mathbb{E}\Big[\mathcal{Y}^2(i_1,\dots,i_d)\Big]&=\Bigg[\sum_{j_1}\mathcal{X}_1(j_1)\otimes\mathcal{X}_1(j_1)\Bigg]\cdots\Bigg[\sum_{j_d}\mathcal{X}_d(j_d)\otimes\mathcal{X}_d(j_d)\Bigg] \nonumber\\
				&=\sum_{j_1,\dots,j_d}\Big[\mathcal{X}_1(j_1)\cdots\mathcal{X}_d(j_d)\Big]\otimes\Big[\mathcal{X}_1(j_1)\cdots\mathcal{X}_d(j_d)\Big] \nonumber\\
				&=\sum_{j_1,\dots,j_d}\mathcal{X}^2(j_1,\dots,j_d) \nonumber\\
				&=\norm{\mathbf{x}}{2}^2. \label{yresult}
			\end{align}
			Substituting \eqref{yresult} into \eqref{yintial}, it concludes that 
			\begin{align*}
				\mathbb{E}{\Arrowvert f_{TTRP}(\mathcal{X})\Arrowvert}^2_2 &=\frac{1}{M}\mathbb{E}\Bigg[\sum_{i_1,\dots,i_d} \mathcal{Y}^2(i_1,\dots,i_d)\Bigg]\\
				&=\frac{1}{M}\times M {\Arrowvert \mathbf{x} \Arrowvert}^2_2\\
				&={\Arrowvert \mathbf{x} \Arrowvert}^2_2.
			\end{align*}
\end{proof}
				% &=\sum_{j_1,\dotsj_d} \Big[\mathcal{X}_1(j_1)\otimes\mathcal{X}_1(j_1)\Big]\cdots\Big[\mathcal{X}_d(j_d)\otimes\mathcal{X}_d(j_d)\Big] \nonumber\\
	%This lemma states that TTRP keeps the property of expected isometry. Before introducing the property of variance bounded. The following proposition is given.   
	
	\begin{theorem}\label{lemma_var}
	     %\normalfont
		Given a vector $\mathbf{x} \in \mathbb{R}^{\prod_{j=1}^{d} n_j}$, if $\mathbf{R}$ in \eqref{eq_ttfpfull} is composed of $d$ independent TT-cores $\mathcal{R}_1,\dots,\mathcal{R}_d$, whose entries are independent and identically random variables with mean zero, variance one, with the same fourth moment $\Delta$ and $\mathcal{M}:=\max_{i=1,\dots,N} \ \lvert\mathbf{x}(i)\rvert,\,m=\max\{m_1,m_2,\dots,m_d\},\, n=\max\{n_1,n_2,\dots,n_d\}$, then
	$$
		\text{Var}\left({\Arrowvert f_{TTRP}(\mathbf{x})\Arrowvert}^{2}_2 \right) \leq \frac{1}{M}\Big(\Delta+n(m+2)-3\Big)^d N\mathcal{M}^4-{\Arrowvert \mathbf{x} \Arrowvert}^4_2.
	$$
	\end{theorem}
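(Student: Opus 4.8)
The plan is to reduce the variance to a fourth-moment estimate and then exploit the rank-one tensor-product structure mode by mode. Since Theorem~\ref{lemma_mean} already gives $\mathbb{E}\,\norm{f_{TTRP}(\mathbf{x})}{2}^2=\norm{\mathbf{x}}{2}^2$, I would begin from
\begin{equation*}
\text{Var}\left(\norm{f_{TTRP}(\mathbf{x})}{2}^2\right)=\mathbb{E}\,\norm{f_{TTRP}(\mathbf{x})}{2}^4-\norm{\mathbf{x}}{2}^4,
\end{equation*}
so the entire task becomes bounding $\mathbb{E}\,\norm{f_{TTRP}(\mathbf{x})}{2}^4$ from above by $\frac{1}{M}(\Delta+n(m+2)-3)^d N\mathcal{M}^4$. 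Writing $\mathbf{y}=\mathbf{R}\mathbf{x}$ and using $\norm{f_{TTRP}(\mathbf{x})}{2}^2=\frac{1}{M}\sum_i \mathbf{y}^2(i)$, I would square, expand, and take expectations to obtain
\begin{equation*}
\mathbb{E}\,\norm{f_{TTRP}(\mathbf{x})}{2}^4=\frac{1}{M^2}\sum_{i,i'}\sum_{j,j',l,l'}\mathbb{E}\big[\mathbf{R}(i,j)\mathbf{R}(i,j')\mathbf{R}(i',l)\mathbf{R}(i',l')\big]\,\mathbf{x}(j)\mathbf{x}(j')\mathbf{x}(l)\mathbf{x}(l').
\end{equation*}

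Because the TT-ranks of $\mathbf{R}$ are all one, each entry factorizes into scalars as $\mathbf{R}(i,j)=\prod_{k=1}^d \mathcal{R}_k(i_k,j_k)$, and the cores $\mathcal{R}_1,\dots,\mathcal{R}_d$ are independent across modes. Hence the fourth-order expectation splits into a product of per-mode factors $\prod_{k=1}^d \phi_k$, where $\phi_k:=\mathbb{E}[\mathcal{R}_k(i_k,j_k)\mathcal{R}_k(i_k,j'_k)\mathcal{R}_k(i'_k,l_k)\mathcal{R}_k(i'_k,l'_k)]$ depends only on the mode-$k$ indices. The key observation is that, by the mean-zero, unit-variance, independence, and pairing structure of the i.i.d.\ entries, each $\phi_k$ takes only the values $0$, $1$, or $\Delta\ge 0$, so $\prod_k\phi_k\ge 0$. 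This nonnegativity lets me replace $|\mathbf{x}(j)\mathbf{x}(j')\mathbf{x}(l)\mathbf{x}(l')|$ by $\mathcal{M}^4$ as a genuine upper bound, after which the residual index sum decouples across modes:
\begin{equation*}
\mathbb{E}\,\norm{f_{TTRP}(\mathbf{x})}{2}^4\le\frac{\mathcal{M}^4}{M^2}\prod_{k=1}^d \Phi_k,\qquad \Phi_k:=\sum_{i_k,i'_k,j_k,j'_k,l_k,l'_k}\phi_k.
\end{equation*}

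The heart of the argument is then the single-mode sum $\Phi_k$, which I would evaluate by splitting on whether $i_k=i'_k$. When $i_k\neq i'_k$ the two row-$i_k$ factors are independent of the two row-$i'_k$ factors, forcing $j_k=j'_k$ and $l_k=l'_k$ and contributing $m_k(m_k-1)n_k^2$. When $i_k=i'_k$ all four entries lie in one row, so the mean-zero property leaves only the three pairwise matchings (each giving $1$) together with the fully-coincident term (giving $\Delta$), yielding $m_k[3n_k^2+(\Delta-3)n_k]$. Adding these produces $\Phi_k=m_k n_k\big(\Delta+n_k(m_k+2)-3\big)$, and the monotone bounds $n_k\le n$, $m_k\le m$ together with $\prod_k m_k=M$ and $\prod_k n_k=N$ give $\prod_k\Phi_k\le MN(\Delta+n(m+2)-3)^d$. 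Substituting back yields $\mathbb{E}\,\norm{f_{TTRP}(\mathbf{x})}{2}^4\le\frac{1}{M}(\Delta+n(m+2)-3)^d N\mathcal{M}^4$, and subtracting $\norm{\mathbf{x}}{2}^4$ completes the proof.

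I expect the main obstacle to be the bookkeeping in $\Phi_k$: correctly enumerating the fourth-moment pairing patterns of the i.i.d.\ entries, in particular distinguishing the three two-pair matchings (each weight $1$) from the fully-coincident case that carries the fourth moment $\Delta$, and keeping the row/column index matchings straight across the two rows $i_k,i'_k$. A second, more subtle point that must be argued carefully is the legitimacy of the mode-wise factorization: it rests on the fact that $\phi_k$ depends only on the mode-$k$ indices (so a sum of products becomes a product of sums) and on $\prod_k\phi_k\ge 0$ (so passing from the signed data product to $\mathcal{M}^4$ is an honest upper bound rather than merely an absolute-value estimate).
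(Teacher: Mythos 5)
Your proof is correct and arrives at exactly the stated bound, but it organizes the computation differently from the paper. The paper splits $\mathbb{E}\Arrowvert\mathbf{y}\Arrowvert_2^4$ into the diagonal part $\sum_i\mathbb{E}[\mathbf{y}^4(i)]$ and the off-diagonal part $\sum_{i\neq j}\mathbb{E}[\mathbf{y}^2(i)\mathbf{y}^2(j)]$, bounds each by repeated Kronecker-product manipulations (tracking, for the off-diagonal part, the number $l$ of modes where $i_k\neq i'_k$ and taking a maximum over $k$ in the cross terms), and then reassembles the pieces through the binomial identity $(\Delta+3(n-1))^d+\sum_{l}C_d^l(m-1)^l n^l(\Delta+3(n-1))^{d-l}=(\Delta+n(m+2)-3)^d$. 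You instead keep everything in a single six-index sum, factor the fourth-order moment across modes using the rank-one structure, and evaluate the per-mode sum exactly as $\Phi_k=m_kn_k\bigl(\Delta+n_k(m_k+2)-3\bigr)$, which already merges the $i_k=i'_k$ and $i_k\neq i'_k$ cases and makes the final exponent-$d$ form appear without any recombination step. Your route buys two things: the binomial bookkeeping disappears because the per-mode factor is computed in closed form before any relaxation of $m_k,n_k$ to $m,n$; and you make explicit the nonnegativity $\phi_k\in\{0,1,\Delta\}$ (with $\Delta\geq 1$ by Jensen) that legitimizes replacing the signed product $\mathbf{x}(j)\mathbf{x}(j')\mathbf{x}(l)\mathbf{x}(l')$ by $\mathcal{M}^4$ and turning the sum of products into a product of sums — a point the paper's chain of inequalities uses only implicitly. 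The paper's version, in exchange, exhibits the intermediate quantities $\mathbb{E}[\mathbf{y}^4(i)]$ and $\mathbb{E}[\mathbf{y}^2(i)\mathbf{y}^2(j)]$ separately, which it reuses when discussing why these cross moments do not factor (Appendix A). Both arguments rest on the same moment classification (three pair matchings of weight $1$ versus the coincident term of weight $\Delta$), and your accounting of it is accurate.
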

	\begin{proof}
		By the property of the variance and using Theorem \ref{lemma_mean}, 
			\begin{align}
				\text{Var}\left({\Arrowvert f_{TTRP}(\mathbf{x}\Arrowvert}^{2}_2\right)&=\mathbb{E}\Big[{\Arrowvert f_{TTRP}(\mathbf{x})\Arrowvert}^{4}_2\Big]-\Bigg[\mathbb{E}\Big[{\Arrowvert f_{TTRP}(\mathbf{x})\Arrowvert}^{2}_2\Big]\Bigg]^2\nonumber\\
				&=\mathbb{E}\Big[\Arrowvert \frac{1}{\sqrt{M}}\mathbf{y}\Arrowvert^4_2\Big]-{\Arrowvert \mathbf{x} \Arrowvert}^4_2 \nonumber\\
				&= \frac{1}{M^2}\mathbb{E}\Big[{\Arrowvert\mathbf{y}\Arrowvert}^{4}_2\Big]-{\Arrowvert \mathbf{x} \Arrowvert}^4_2 \label{var_right}\\
				&=\frac{1}{M^2}\Bigg[\sum_{i=1}^M \mathbb{E}\Big[\mathbf{y}^4(i)\Big]+\sum_{i\neq j}\mathbb{E}\Big[\mathbf{y}^2(i){\mathbf{y}^2(j)}\Big]\Bigg]-{\Arrowvert \mathbf{x} \Arrowvert}^4_2, \label{square_right}
			\end{align}
% 			Considering the first term of the right hand side of \eqref{var_right}, we have
% 			\begin{align}
% 				\mathbb{E}\Big[{\Arrowvert\mathbf{y}\Arrowvert}^{4}_2\Big]&=\sum_{i=1}^M \mathbb{E}\Big[\mathbf{y}^4(i)\Big]+\sum_{i\neq j}\mathbb{E}\Big[\mathbf{y}^2(i){\mathbf{y}^2(j)}\Big], \label{square_right}
% 			\end{align}
			where note that $\mathbb{E}[\mathbf{y}^2(i)\mathbf{y}^2(j)]\neq \mathbb{E}[\mathbf{y}^2(i)]\mathbb{E}[\mathbf{y}^2(j)]$ in general and a simple example can be found in Appendix A.
		
			We compute the first term of the right hand side of \eqref{square_right},
    	\begin{align}
			\mathbb{E}\Big[\mathbf{y}^4(i)\Big]
			&=\mathbb{E}\Big[\mathcal{Y}(i_1,\dots,i_d)\otimes\mathcal{Y}(i_1,\dots,i_d)\otimes\mathcal{Y}(i_1,\dots,i_d)\otimes\mathcal{Y}(i_1,\dots,i_d)\Big] \label{yyyy1}\\
			&=\mathbb{E}\Bigg[\Big[\mathcal{Y}_1(i_1)\otimes\mathcal{Y}_1(i_1)\otimes\mathcal{Y}_1(i_1)\otimes\mathcal{Y}_1(i_1)\Big]\cdots\Big[\mathcal{Y}_d(i_d)\otimes\mathcal{Y}_d(i_d)\otimes\mathcal{Y}_d(i_d)\otimes\mathcal{Y}_d(i_d)\Big]\Bigg] \label{yyyy2}\\
		&=\mathbb{E}\Big[\mathcal{Y}_1(i_1)\otimes\mathcal{Y}_1(i_1)\otimes\mathcal{Y}_1(i_1)\otimes\mathcal{Y}_1(i_1)\Big]\cdots\mathbb{E}\Big[\mathcal{Y}_d(i_d)\otimes\mathcal{Y}_d(i_d)\otimes\mathcal{Y}_d(i_d)\otimes\mathcal{Y}_d(i_d)\Big] \label{yyyy3},
			\end{align}
			where $\mathbf{y}(i)=\mathcal{Y}(i_1,\dots,i_d)$, applying \eqref{multiply} to \eqref{yyyy1} obtains \eqref{yyyy2}, and we derive \eqref{yyyy3} from \eqref{yyyy2} by the independence of TT-cores $\{\mathcal{R}_k\}^d_{k=1}$.
			
			Considering the $k$-th term of the right hand side of \eqref{yyyy3}, for $k=1,\dots,d$, we obtain that
			\begin{align}
		 	\mathbb{E}\Big[&\mathcal{Y}_k(i_k)\otimes\mathcal{Y}_k(i_k)\otimes\mathcal{Y}_k(i_k)\otimes\mathcal{Y}_k(i_k)\Big] \nonumber\\
		 	=&\mathbb{E}\Bigg[\Big[\sum_{j_k}\mathcal{R}_k(i_k,j_k)\otimes\mathcal{X}_k(j_k)\Big]\otimes\Big[\sum_{j_k}\mathcal{R}_k(i_k,j_k)\otimes\mathcal{X}_k(j_k)\Big] \nonumber\\
			&\otimes\Big[\sum_{j_k}\mathcal{R}_k(i_k,j_k)\otimes\mathcal{X}_k(j_k)\Big]\otimes\Big[\sum_{j_k}\mathcal{R}_k(i_k,j_k)\otimes\mathcal{X}_k(j_k)\Big]\Bigg] \label{yyyk1}\\
		=&\mathbb{E}\Bigg[\Big[\sum_{j_k}\mathcal{R}_k(i_k,j_k)\mathcal{X}_k(j_k)\Big]\otimes\Big[\sum_{j_k}\mathcal{R}_k(i_k,j_k)\mathcal{X}_k(j_k)\Big] \nonumber\\
		&\otimes\Big[\sum_{j_k}\mathcal{R}_k(i_k,j_k)\mathcal{X}_k(j_k)\Big]\otimes\Big[\sum_{j_k}\mathcal{R}_k(i_k,j_k)\mathcal{X}_k(j_k)\Big]\Bigg] \label{yyyk2}\\
			=&\mathbb{E}\Big[\sum_{j_k}\mathcal{R}^4_k(i_k,j_k)\mathcal{X}_k(j_k)\otimes\mathcal{X}_k(j_k)\otimes\mathcal{X}_k(j_k)\otimes\mathcal{X}_k(j_k)\Big] \nonumber\\
				&+\mathbb{E}\Big[\sum_{j_k\neq j'_k}\mathcal{R}^2_k(i_k,j_k)\mathcal{R}^2_k(i_k,j'_k)\mathcal{X}_k(j_k)\otimes\mathcal{X}_k(j_k)\otimes\mathcal{X}_k(j'_k)\otimes\mathcal{X}_k(j'_k)\Big] \nonumber\\
				&+\mathbb{E}\Big[\sum_{j_k\neq j'_k}\mathcal{R}^2_k(i_k,j_k)\mathcal{R}^2_k(i_k,j'_k)\mathcal{X}_k(j_k)\otimes\mathcal{X}_k(j'_k)\otimes\mathcal{X}_k(j_k)\otimes\mathcal{X}_k(j'_k)\Big] \nonumber\\
				&+\mathbb{E}\Big[\sum_{j_k\neq j'_k}\mathcal{R}^2_k(i_k,j_k)\mathcal{R}^2_k(i_k,j'_k)\mathcal{X}_k(j_k)\otimes\mathcal{X}_k(j'_k)\otimes\mathcal{X}_k(j'_k)\otimes\mathcal{X}_k(j_k)\Big] \label{yyyk3}\\
			=&\Delta\sum_{j_k}\mathcal{X}_k(j_k)\otimes\mathcal{X}_k(j_k)\otimes\mathcal{X}_k(j_k)\otimes\mathcal{X}_k(j_k)
				+\sum_{j_k\neq j'_k}\mathcal{X}_k(j_k)\otimes\mathcal{X}_k(j_k)\otimes\mathcal{X}_k(j'_k)\otimes\mathcal{X}_k(j'_k) \nonumber\\
				&+\sum_{j_k\neq j'_k}\mathcal{X}_k(j_k)\otimes\mathcal{X}_k(j'_k)\otimes\mathcal{X}_k(j_k)\otimes\mathcal{X}_k(j'_k)
				+\sum_{j_k\neq j'_k}\mathcal{X}_k(j_k)\otimes\mathcal{X}_k(j'_k)\otimes\mathcal{X}_k(j'_k)\otimes\mathcal{X}_k(j_k), \label{yyyk4}
			\end{align}
			where we infer \eqref{yyyk2} from \eqref{yyyk1} by scalar property of $\mathcal{R}_k(i_k,j_k)$, \eqref{yyyk3} is obtained by \eqref{distributive} and the independence of TT-cores $\{\mathcal{R}_k\}^d_{k=1}$, and denoting the fourth moment $\Delta:=\mathbb{E}\Big[\mathcal{R}^4_k(i_k,j_k)\Big]$, we deduce \eqref{yyyk4} by the assumption $\mathbb{E}\Big[\mathcal{R}^2_k (i_k,j_k)\Big]=1$, for $k=1,\dots,d$.
			
			Substituting \eqref{yyyk4} into \eqref{yyyy3}, it implies that 
			\begin{align}
			   \mathbb{E}&\Big[\mathcal{Y}^4(i_1,\dots,i_d)\Big] \nonumber\\
			   =&\Big[\Delta\sum_{j_1}\mathcal{X}_1(j_1)\otimes\mathcal{X}_1(j_1)\otimes\mathcal{X}_1(j_1)\otimes\mathcal{X}_1(j_1)+\sum_{j_1\neq j'_1}\mathcal{X}_1(j_1)\otimes\mathcal{X}_1(j_1)\otimes\mathcal{X}_1(j'_1)\otimes\mathcal{X}_1(j'_1) \nonumber\\
			    &+\sum_{j_1\neq j'_1}\mathcal{X}_1(j_1)\otimes\mathcal{X}_1(j'_1)\otimes\mathcal{X}_1(j_1)\otimes\mathcal{X}_1(j'_1)+\sum_{j_1\neq j'_1}\mathcal{X}_1(j_1)\otimes\mathcal{X}_1(j'_1)\otimes\mathcal{X}_1(j'_1)\otimes\mathcal{X}_1(j_1)\Big]  \nonumber \\
			    &\cdots\Big[\Delta\sum_{j_d}\mathcal{X}_d(j_d)\otimes\mathcal{X}_d(j_d)\otimes\mathcal{X}_d(j_d)\otimes\mathcal{X}_d(j_d)+\sum_{j_d\neq j'_d}\mathcal{X}_d(j_d)\otimes\mathcal{X}_d(j_d)\otimes\mathcal{X}_d(j'_d)\otimes\mathcal{X}_d(j'_d) \nonumber\\
			    &+\sum_{j_d\neq j'_d}\mathcal{X}_d(j_d)\otimes\mathcal{X}_d(j'_d)\otimes\mathcal{X}_d(j_d)\otimes\mathcal{X}_d(j'_d)+\sum_{j_d\neq j'_d}\mathcal{X}_d(j_d)\otimes\mathcal{X}_d(j'_d)\otimes\mathcal{X}_d(j'_d)\otimes\mathcal{X}_d(j_d)\Big]  \nonumber\\
			    \leq& \Delta^d \sum_{j_1,\dots,j_d}\Bigg[\Big[\mathcal{X}_1(j_1)\otimes\mathcal{X}_1(j_1)\otimes\mathcal{X}_1(j_1)\otimes\mathcal{X}_1(j_1)\Big]\cdots\Big[\mathcal{X}_d(j_d)\otimes\mathcal{X}_d(j_d)\otimes\mathcal{X}_d(j_d)\otimes\mathcal{X}_d(j_d)\Big]\Bigg]  \nonumber\\
			    &+\Delta^{d-1}C_d^1\underset{k}{\max}\Bigg[\sum_{j_1,..,j_k\neq j'_k,\dots, j_d}\Big[\mathcal{X}_1(j_1)\otimes\mathcal{X}_1(j_1)\otimes\mathcal{X}_1(j_1)\otimes\mathcal{X}_1(j_1)\Big]\cdots \nonumber\\
			    &\Big[\mathcal{X}_k(j_k)\otimes\mathcal{X}_k(j_k)\otimes\mathcal{X}_k(j'_k)\otimes\mathcal{X}_k(j'_k)\Big]
			    \cdots \Big[\mathcal{X}_d(j_d)\otimes\mathcal{X}_d(j_d)\otimes\mathcal{X}_d(j_d)\otimes\mathcal{X}_d(j_d)\Big]\Bigg] \nonumber\\
			    &+\Delta^{d-1}C_d^1\underset{k}{\max}\Bigg[\sum_{j_1,..,j_k\neq j'_k,\dots, j_d}\Big[\mathcal{X}_1(j_1)\otimes\mathcal{X}_1(j_1)\otimes\mathcal{X}_1(j_1)\otimes\mathcal{X}_1(j_1)\Big]\cdots  \nonumber\\
			    &\Big[\mathcal{X}_k(j_k)\otimes\mathcal{X}_k(j'_k)\otimes\mathcal{X}_k(j_k)\otimes\mathcal{X}_k(j'_k)\Big]\cdots \Big[\mathcal{X}_d(j_d)\otimes\mathcal{X}_d(j_d)\otimes\mathcal{X}_d(j_d)\otimes\mathcal{X}_d(j_d)\Big]\Bigg]  \nonumber\\
			    &+\Delta^{d-1}C_d^1\underset{k}{\max}\Bigg[\sum_{j_1,..,j_k\neq j'_k,\dots, j_d}\Big[\mathcal{X}_1(j_1)\otimes\mathcal{X}_1(j_1)\otimes\mathcal{X}_1(j_1)\otimes\mathcal{X}_1(j_1)\Big]\cdots  \nonumber\\
			    &\Big[\mathcal{X}_k(j_k)\otimes\mathcal{X}_k(j'_k)\otimes\mathcal{X}_k(j'_k)\otimes\mathcal{X}_k(j_k)\Big]
			    \cdots \Big[\mathcal{X}_d(j_d)\otimes\mathcal{X}_d(j_d)\otimes\mathcal{X}_d(j_d)\otimes\mathcal{X}_d(j_d)\Big]\Bigg]+\cdots \label{computing_delta}\\
			    \leq& \Delta^d\sum_{j_1,\dots,j_d}\mathcal{X}^4(j_1,\dots,j_d)+ 3\Delta^{d-1}C_d^1\underset{k}{\max}\Bigg[\sum_{j_1,..,j_k\neq j'_k,\dots, j_d} \mathcal{X}(j_1,\dots,j_k,\dots,j_d)^2\mathcal{X}(j_1,\dots,j'_k,\dots,j_d)^2 \Bigg] + \cdots \label{delta_result}\\
			    \leq& \Delta^d{\Arrowvert \mathbf{x} \Arrowvert}^4_4+3(n-1)\Delta^{d-1}C^1_d N\mathcal{M}^4+3^2(n-1)^2\Delta^{d-2}C^2_d N\mathcal{M}^4+\cdots+3^d(n-1)^d N\mathcal{M}^4 \nonumber\\
				\leq&\Big(\Delta+3(n-1)\Big)^d N\mathcal{M}^4, \label{yiresult}
			    \end{align}
			    where denoting $\mathcal{M}:=\max_{i=1,\dots,N} \ \lvert\mathbf{x}(i)\rvert,\, n=\max\{n_1,n_2,\dots,n_d\}$, we derive \eqref{delta_result} from \eqref{computing_delta} by \eqref{multiply}.
% 		where the first term of the right hand side of  

Similarly, the second term $\mathbb{E}\Big[\mathbf{y}^2(i)\mathbf{y}^2(j)\Big]$ of the right hand side of \eqref{square_right}, for $i\neq j,\,\nu(i)=(i_1,i_2,\dots,i_d)\neq \nu(j)=(i'_1,i'_2,\dots,i'_d)$, is obtained by 
			\begin{align}
			  \mathbb{E}&\Big[\mathbf{y}^2(i)\mathbf{y}^2(j)\Big] \nonumber\\
			  =&\mathbb{E}\Big[\mathcal{Y}_1(i_1)\otimes\mathcal{Y}_1(i_1)\otimes\mathcal{Y}_1(i'_1)\otimes\mathcal{Y}_1(i'_1)\Big]\cdots\mathbb{E}\Big[\mathcal{Y}_d(i_d)\otimes\mathcal{Y}_d(i_d)\otimes\mathcal{Y}_d(i'_d)\otimes\mathcal{Y}_d(i'_d)\Big]. \label{yiyj}
			 \end{align}
If $i_k\neq i'_k$, for $k=1,\dots,d$, then the $k$-th term of the right hand side of \eqref{yiyj} is computed by
\begin{align}
			\mathbb{E}&\Big[\mathcal{Y}_k(i_k)\otimes\mathcal{Y}_k(i_k)\otimes\mathcal{Y}_k(i'_k)\otimes\mathcal{Y}_k(i'_k)\Big] \nonumber\\     
			 =&\mathbb{E}\Bigg[\Big[\sum_{j_k}\mathcal{R}_k(i_k,j_k)\mathcal{X}_k(j_k)\Big]\otimes\Big[\sum_{j_k}\mathcal{R}_k(i_k,j_k)\mathcal{X}_k(j_k)\Big] \nonumber\\
				&\otimes\Big[\sum_{j_k}\mathcal{R}_k(i'_k,j_k)\mathcal{X}_k(j_k)\Big]\otimes\Big[\sum_{j_k}\mathcal{R}_k(i'_k,j_k)\mathcal{X}_k(j_k)\Big]\Bigg]\\
				=&\mathbb{E}\Big[\sum_{j_k}\mathcal{R}^2_k(i_k,j_k)\mathcal{R}^2_k(i'_k,j_k)\mathcal{X}_k(j_k)\otimes\mathcal{X}_k(j_k)\otimes\mathcal{X}_k(j_k)\otimes\mathcal{X}_k(j_k)\Big]\nonumber\\
				&+\mathbb{E}\Big[\sum_{j_k\neq j'_k}\mathcal{R}^2_k(i_k,j_k)\mathcal{R}^2_k(i'_k,j'_k)\mathcal{X}_k(j_k)\otimes\mathcal{X}_k(j_k)\otimes\mathcal{X}_k(j'_k)\otimes\mathcal{X}_k(j'_k)\Big]\\
				=&\sum_{j_k}\mathcal{X}_k(j_k)\otimes\mathcal{X}_k(j_k)\otimes\mathcal{X}_k(j_k)\otimes\mathcal{X}_k(j_k)
				+\sum_{j_k\neq j'_k}\mathcal{X}_k(j_k)\otimes\mathcal{X}_k(j_k)\otimes\mathcal{X}_k(j'_k)\otimes\mathcal{X}_k(j'_k). \label{ykresult}
\end{align}
Supposing that $i_1=i'_1,\dots,i_k\neq i'_k,\dots,i_d=i'_d$ and substituting \eqref{yyyk4} and \eqref{ykresult} into \eqref{yiyj}, we obtain
% \begin{align*}
%     \mathbb{E}&\Big[\mathcal{Y}_k(i_k)\otimes\mathcal{Y}_k(i_k)\otimes\mathcal{Y}_k(i'_k)\otimes\mathcal{Y}_k(i'_k)\Big]\\
%     =& \mathbb{E}\Big[\mathcal{Y}_k(i_k)\otimes\mathcal{Y}_k(i_k)\otimes\mathcal{Y}_k(i_k)\otimes\mathcal{Y}_k(i_k)\Big]  \\
%     =&\Delta\sum_{j_k}\mathcal{X}_k(j_k)\otimes\mathcal{X}_k(j_k)\otimes\mathcal{X}_k(j_k)\otimes\mathcal{X}_k(j_k)+\sum_{j_k\neq j'_k}\mathcal{X}_k(j_k)\otimes\mathcal{X}_k(j_k)\otimes\mathcal{X}_k(j'_k)\otimes\mathcal{X}_k(j'_k)\\
% 				&+\sum_{j_k\neq j'_k}\mathcal{X}_k(j_k)\otimes\mathcal{X}_k(j'_k)\otimes\mathcal{X}_k(j_k)\otimes\mathcal{X}_k(j'_k)
% 				+\sum_{j_k\neq j'_k}\mathcal{X}_k(j_k)\otimes\mathcal{X}_k(j'_k)\otimes\mathcal{X}_k(j'_k)\otimes\mathcal{X}_k(j_k),
% \end{align*}
\begin{align}
    \mathbb{E}&\Big[\mathbf{y}^2(i)\mathbf{y}^2(j)\Big] \nonumber\\
    =&	\mathbb{E}\Big[\mathcal{Y}_1(i_1)\otimes\mathcal{Y}_1(i_1)\otimes\mathcal{Y}_1(i_1)\otimes\mathcal{Y}_1(i_1)\Big]\cdots \mathbb{E}\Big[\mathcal{Y}_k(i_k)\otimes\mathcal{Y}_k(i_k)\otimes\mathcal{Y}_k(i'_k)\otimes\mathcal{Y}_k(i'_k)\Big]\cdots \nonumber\\
    &\mathbb{E}\Big[\mathcal{Y}_d(i_d)\otimes\mathcal{Y}_d(i_d)\otimes\mathcal{Y}_d(i_d)\otimes\mathcal{Y}_d(i_d)\Big] \nonumber\\
    =&\Big[\Delta\sum_{j_1}\mathcal{X}_1(j_1)\otimes\mathcal{X}_1(j_1)\otimes\mathcal{X}_1(j_1)\otimes\mathcal{X}_1(j_1)
				+\sum_{j_1\neq j'_1}\mathcal{X}_1(j_1)\otimes\mathcal{X}_1(j_1)\otimes\mathcal{X}_1(j'_1)\otimes\mathcal{X}_1(j'_1) \nonumber\\
				&+\sum_{j_1\neq j'_1}\mathcal{X}_1(j_1)\otimes\mathcal{X}_1(j'_1)\otimes\mathcal{X}_1(j_1)\otimes\mathcal{X}_1(j'_1)
				+\sum_{j_1\neq j'_1}\mathcal{X}_1(j_1)\otimes\mathcal{X}_1(j'_1)\otimes\mathcal{X}_1(j'_1)\otimes\mathcal{X}_1(j_1)\Big]\nonumber\\
                & \cdots\Big[\sum_{j_k}\mathcal{X}_k(j_k)\otimes\mathcal{X}_k(j_k)\otimes\mathcal{X}_k(j_k)\otimes\mathcal{X}_k(j_k)
				+\sum_{j_k\neq j'_k}\mathcal{X}_k(j_k)\otimes\mathcal{X}_k(j_k)\otimes\mathcal{X}_k(j'_k)\otimes\mathcal{X}_k(j'_k)\Big]\nonumber\\
				&\cdots\Big[\Delta\sum_{j_d}\mathcal{X}_d(j_d)\otimes\mathcal{X}_d(j_d)\otimes\mathcal{X}_d(j_d)\otimes\mathcal{X}_d(j_d)
				+\sum_{j_d\neq j'_d}\mathcal{X}_d(j_d)\otimes\mathcal{X}_d(j_d)\otimes\mathcal{X}_d(j'_d)\otimes\mathcal{X}_d(j'_d) \nonumber\\
				&+\sum_{j_d\neq j'_d}\mathcal{X}_d(j_d)\otimes\mathcal{X}_d(j'_d)\otimes\mathcal{X}_d(j_d)\otimes\mathcal{X}_d(j'_d)
				+\sum_{j_d\neq j'_d}\mathcal{X}_d(j_d)\otimes\mathcal{X}_d(j'_d)\otimes\mathcal{X}_d(j'_d)\otimes\mathcal{X}_d(j_d)\Big] \nonumber\\
				\leq& n(\Delta+3(n-1))^{d-1} N\mathcal{M}^4. \label{yiyj1}
\end{align}
Similarly, if for $k\in S \subseteq \{1,\dots,d\},\,\lvert S\rvert=l$, $i_k\neq i'_k$, and for $k\in \overline{S}$, $i_k=i'_k$, then
\begin{align}
    \mathbb{E}\Big[\mathbf{y}^2(i)\mathbf{y}^2(j)\Big]\leq n^l(\Delta+3(n-1))^{d-l} N\mathcal{M}^4. \label{yiyjl}
\end{align}
Hence, combining \eqref{yiyj1} and \eqref{yiyjl} gives
\begin{align}
    \sum_{i\neq j}\mathbb{E}\Big[\mathbf{y}^2(i)\mathbf{y}^2(j)\Big]
    \leq& M\Big[C^1_d(m-1)n(\Delta+3(n-1))^{d-1}+\cdots+C^l_d(m-1)^ln^l(\Delta+3(n-1))^{(d-l)} \nonumber\\
    &+\cdots+C^d_d(m-1)^d n^d\Big]N\mathcal{M}^4, \label{totalyiyj}
\end{align}
where $m=\max\{m_1,m_2,\dots,m_d\}$.\\
Therefore, using \eqref{yiresult} and \eqref{totalyiyj} deduces
			\begin{align}
		 \mathbb{E}\Big[{\Arrowvert\mathbf{y}\Arrowvert}^{4}_2\Big]\leq& M\Big[(\Delta+3(n-1))^d+C^1_d(m-1)n(\Delta+3(n-1))^{d-1}+\cdots+C^d_d(m-1)^d n^d\Big]N\mathcal{M}^4 \nonumber\\
		 &=M\Big((m-1)n+\Delta+3(n-1)\Big)^d N \mathcal{M}^4 \nonumber\\
			    &=M\Big(\Delta+n(m+2)-3\Big)^d N \mathcal{M}^4. \label{y24}
			\end{align}
			In summary, substituting \eqref{y24} into \eqref{var_right} implies
			\begin{align}
				\text{Var}\Big({\Arrowvert f_{TTRP}(\mathbf{x})\Arrowvert}^{2}_2\Big)\leq&\frac{M\Big(\Delta+n(m+2)-3\Big)^d N \mathcal{M}^4}{M^2}-{\Arrowvert \mathbf{x} \Arrowvert}^4_2 \nonumber\\
				\leq& \frac{1}{M}\Big(\Delta+n(m+2)-3\Big)^d N\mathcal{M}^4-{\Arrowvert \mathbf{x} \Arrowvert}^4_2. \label{upper_bound}
			\end{align}
	\end{proof}
	{\lr One can see that the bound of the variance \eqref{upper_bound} is reduced as $M$ increases, which is expected. When $M=m^d$ and $N=n^d$, we have 
	\begin{align}
	    \text{Var}\Big({\Arrowvert f_{TTRP}(\mathbf{x})\Arrowvert}^{2}_2\Big)&\leq \Big(\frac{\Delta+2n-3}{m}+n\Big)^d N\mathcal{M}^4-{\Arrowvert \mathbf{x} \Arrowvert}^4_2.
	    \label{upper_d}
	\end{align}
	As $m$ increases, the upper bound in \eqref{upper_d} tends to $(N^2 \mathcal{M}^4-{\Arrowvert \mathbf{x} \Arrowvert}^4_2)\geq0$, and this upper bound vanishes as $M$ increases if and only if $\mathbf{x}(1)=\mathbf{x}(2)=\dots=\mathbf{x}(N)$.} 
	 Also, {\lr the upper bound \eqref{upper_bound}} is affected by the fourth moment $\Delta=\mathbb{E}\Big[\mathcal{R}^4_k(i_k,j_k)\Big]=\text{Var}\Big(\mathcal{R}^2_k(i_k,j_k)\Big)+\Big[\mathbb{E}[\mathcal{R}^2_k(i_k,j_k)]\Big]^2$. To keep the expected isometry, we need $\mathbb{E}[\mathcal{R}^2_k(i_k,j_k)]=1$.
	 Note that when the TT-cores follow the Rademacher distribution i.e.,\,$\text{Var}\Big(\mathcal{R}^2_k(i_k,j_k)\Big)=0$, the fourth moment $\Delta$ in \eqref{upper_bound} achieves the minimum. So, the Rademacher distribution is an optimal choice for generating the TT-cores, and we set the Rademacher distribution to be our default choice for constructing TTRP (Definition \ref{defTTRP}). 
	 %Hence, minimizing the upper bound is to control $\text{Var}(\mathcal{R}^2_1(i_1,j_1))$. Hence Rademacher distribution for TT-cores is optimal.
	 \begin{proposition}\label{hyper}
	 (Hypercontractivity \cite{schudy2012concentration}) Consider a degree $q$ polynomial $f(Y)=$ $f\left(Y_{1}, \ldots, Y_{n}\right)$ of independent centered Gaussian or Rademacher random variables $Y_{1}, \ldots, Y_{n} .$ Then for any $\lambda>0$
\begin{equation*}
    \mathbb{P}\left(\left\lvert f(Y)-\mathbb{E}\left[f(Y)\right]\right\rvert \geq \lambda\right) \leq e^{2} \cdot \exp{\left[-\left(\frac{\lambda^2}{K\cdot \text{Var}[f(Y)] }\right)^{\frac{1}{q}}\right]},
\end{equation*}
where $\operatorname{Var}([f(Y)])$ is the variance of the random variable $f(Y)$ and $K>0$ is an absolute constant.
	 \end{proposition}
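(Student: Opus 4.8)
Since this statement is quoted verbatim from \cite{schudy2012concentration}, a self-contained argument would follow the standard ``moments-to-tails'' route for low-degree polynomials of hypercontractive variables. The plan is first to reduce to the centered polynomial $g(Y):=f(Y)-\mathbb{E}[f(Y)]$, which is again a polynomial of degree at most $q$ in $Y_1,\dots,Y_n$ and satisfies $\mathbb{E}[g(Y)]=0$ and $\mathbb{E}[g(Y)^2]=\text{Var}[f(Y)]$. The probability in question is then the two-sided tail $\mathbb{P}(|g(Y)|\ge\lambda)$, so it suffices to control the growth of the absolute moments $\mathbb{E}[|g(Y)|^p]$ and convert that growth into a tail bound.

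The key step, and the one I expect to be the main obstacle, is the hypercontractive moment inequality: for every real $p\ge 2$,
\begin{equation*}
\mathbb{E}\big[|g(Y)|^p\big]^{1/p}\le \big(K_0(p-1)\big)^{q/2}\,\big(\text{Var}[f(Y)]\big)^{1/2},
\end{equation*}
with $K_0$ an absolute constant. For Rademacher variables this is the Bonami--Beckner hypercontractive inequality applied to the noise operator $T_\rho$, using that $g$ lies in the span of Fourier characters of degree at most $q$; for Gaussian variables it is Nelson's Gaussian hypercontractivity (equivalently, the log-Sobolev inequality for the Ornstein--Uhlenbeck semigroup). Establishing this comparison of the $L^p$ and $L^2$ norms with the sharp $(p-1)^{q/2}$ dependence on the degree is the analytic heart of the argument; everything after it is elementary.

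Granting the moment bound, I would finish with Markov's inequality applied to $|g(Y)|^p$: writing $\sigma^2:=\text{Var}[f(Y)]$,
\begin{equation*}
\mathbb{P}\big(|g(Y)|\ge\lambda\big)\le \frac{\mathbb{E}[|g(Y)|^p]}{\lambda^p}\le\left(\frac{\big(K_0(p-1)\big)^{q/2}\sigma}{\lambda}\right)^{p}.
\end{equation*}
Optimizing the exponent over $p$ — the near-optimal choice being $p$ of order $(\lambda^2/\sigma^2)^{1/q}$ — drives the right-hand side to $\exp\big[-(\lambda^2/(K\sigma^2))^{1/q}\big]$, where the factors $q$, $e$, and $K_0$ produced by the optimization are absorbed into a single absolute constant $K$ (a short computation shows the worst case over $q$ is bounded). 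The only remaining subtlety is that the moment inequality needs $p\ge 2$: in the small-deviation regime where the optimal $p$ would fall below $2$, the tail bound is vacuous, and one simply takes $p=2$ and inflates the estimate by a bounded factor. This is precisely the role of the prefactor $e^{2}$, and folding in that $O(1)$ correction completes the proof.
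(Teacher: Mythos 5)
The paper does not actually prove this proposition: it is imported verbatim from the cited reference, with the remark ``whose proof can be found in \cite{schudy2012concentration}'', so there is no in-paper argument to compare against. Your sketch is the standard and correct derivation underlying that cited result: center $f$, invoke the degree-$q$ hypercontractive moment comparison $\mathbb{E}[|g|^p]^{1/p}\le (K_0(p-1))^{q/2}(\operatorname{Var}[f])^{1/2}$ for $p\ge 2$ (Bonami--Beckner in the Rademacher case, Nelson/Ornstein--Uhlenbeck in the Gaussian case), apply Markov to $|g|^p$, optimize $p\asymp(\lambda^2/\operatorname{Var}[f])^{1/q}$, and absorb the small-$p$ regime into the prefactor $e^2$. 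The optimization and the vacuity argument for $p<2$ are both sound. The only caveat is that the analytic core --- the $(p-1)^{q/2}$ moment growth for low-degree polynomials --- is asserted rather than proven; since it is a named classical theorem and you identify it correctly for both distributions, this is an acceptable level of detail for reproving a quoted auxiliary result, but a fully self-contained proof would need to establish that inequality (e.g., by induction on $n$ for the two-point space and tensorization).
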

Proposition \ref{hyper} extends the Hanson-Wright inequality whose proof can be found in \cite{schudy2012concentration}. 
	 
\begin{proposition}\label{con}
Let $f_{TTRP}: \dsR^N \mapsto \dsR^{M}$ be the tensor train random projection defined by \eqref{eq_ttfpfull}. Suppose that for $i=1, \ldots, d$, all entries of TT-cores $\mathcal{R}_i$ are independent standard Gaussian or Rademacher random variables, with the same fourth moment $\Delta$ and $\mathcal{M}:=\max_{i=1,\dots,N} \ \lvert \mathbf{x}(i)\rvert,\,m=\max\{m_1,m_2,\dots,m_d\},\, n=\max\{n_1,n_2,\dots,n_d\}$. For any $\mathbf{x} \in \dsR^{N}$, there exist absolute constants $C$ and $K>0$ such that the following claim holds
\begin{equation}
    \mathbb{P} \left ( \left\lvert \norm{f_{TTRP}(\mathbf{x})}{2}^2 - \norm{\mathbf{x}}{2}^2 \right\rvert \geq \varepsilon \norm{\mathbf{x}}{2}^2 \right ) \leq  C \exp{\left[-\left(\frac{M\cdot\varepsilon^2 }{K\cdot\left[\left(\Delta+n(m+2)-3\right)^d N-M\right]}\right)^\frac{1}{2d}\right]}. \label{prop2_ineq}
\end{equation}
\end{proposition}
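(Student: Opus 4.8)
The plan is to apply the hypercontractivity concentration inequality of Proposition \ref{hyper} directly to the random variable $f(Y):=\norm{f_{TTRP}(\mathbf{x})}{2}^2$, treating the independent entries $\{\mathcal{R}_k(i_k,j_k)\}$ of the TT-cores $\mathcal{R}_1,\dots,\mathcal{R}_d$ as the underlying centered Gaussian or Rademacher variables $Y$ required by that proposition. The first and most essential step is to verify that $f(Y)$ is a polynomial of degree $2d$ in these variables, so that Proposition \ref{hyper} applies with $q=2d$. Since the TT-ranks are all one, each factor $\mathcal{Y}_k(i_k)=\sum_{j_k}\mathcal{R}_k(i_k,j_k)\mathcal{X}_k(j_k)$ is a scalar that is linear in the entries of the $k$-th core (the $\mathcal{X}_k(j_k)$ being deterministic). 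Hence each coordinate $\mathbf{y}(i)=\mathcal{Y}_1(i_1)\cdots\mathcal{Y}_d(i_d)$ is a product of $d$ linear forms in disjoint, independent sets of variables, i.e.\ a degree-$d$ polynomial; squaring and summing in $\norm{f_{TTRP}(\mathbf{x})}{2}^2=\tfrac{1}{M}\sum_{i=1}^M\mathbf{y}^2(i)$ then yields a degree-$2d$ polynomial, giving $q=2d$.

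With this identification, I would instantiate Proposition \ref{hyper} with $\lambda=\varepsilon\norm{\mathbf{x}}{2}^2$ and use the unbiasedness established in Theorem \ref{lemma_mean}, namely $\mathbb{E}[f(Y)]=\norm{\mathbf{x}}{2}^2$, to recognize the deviation event as $\{|f(Y)-\mathbb{E}[f(Y)]|\geq\varepsilon\norm{\mathbf{x}}{2}^2\}$. This immediately produces the intermediate bound
\begin{equation*}
\mathbb{P}\left(\left\lvert\norm{f_{TTRP}(\mathbf{x})}{2}^2-\norm{\mathbf{x}}{2}^2\right\rvert\geq\varepsilon\norm{\mathbf{x}}{2}^2\right)\leq e^2\exp\left[-\left(\frac{\varepsilon^2\norm{\mathbf{x}}{2}^4}{K\cdot\text{Var}[f(Y)]}\right)^{\frac{1}{2d}}\right].
\end{equation*}

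It then remains to replace the exact variance by the explicit bound of Theorem \ref{lemma_var}. Using $\mathcal{M}^2=\max_i|\mathbf{x}(i)|^2\leq\norm{\mathbf{x}}{2}^2$, hence $\mathcal{M}^4\leq\norm{\mathbf{x}}{2}^4$, I would enlarge the variance bound to
\begin{equation*}
\text{Var}[f(Y)]\leq\frac{1}{M}\big(\Delta+n(m+2)-3\big)^d N\mathcal{M}^4-\norm{\mathbf{x}}{2}^4\leq\frac{\norm{\mathbf{x}}{2}^4}{M}\Big[\big(\Delta+n(m+2)-3\big)^d N-M\Big].
\end{equation*}
Substituting this into the denominator (so that the factor $\norm{\mathbf{x}}{2}^4$ cancels) and invoking the monotonicity of $t\mapsto e^2\exp(-t^{1/(2d)})$ yields exactly the claimed inequality \eqref{prop2_ineq} with the absolute constant $C=e^2$.

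The main obstacle is the degree bookkeeping of the first step: one must argue carefully that the rank-one TT structure makes $\mathbf{y}(i)$ genuinely multilinear of total degree $d$ (one degree per independent core), so that $q=2d$ is the correct polynomial degree fed into Proposition \ref{hyper}. Once this is established, the remaining steps are a direct substitution of the mean and variance from Theorems \ref{lemma_mean} and \ref{lemma_var}, combined with the elementary estimate $\mathcal{M}^4\leq\norm{\mathbf{x}}{2}^4$ and the monotonicity of the tail function.
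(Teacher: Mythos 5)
Your proposal is correct and follows essentially the same route as the paper: apply the hypercontractivity bound of Proposition \ref{hyper} with $q=2d$ to the degree-$2d$ polynomial $\norm{f_{TTRP}(\mathbf{x})}{2}^2$, use Theorem \ref{lemma_mean} for the mean and Theorem \ref{lemma_var} for the variance, and absorb $\mathcal{M}^4\leq\norm{\mathbf{x}}{2}^4$ to cancel the $\norm{\mathbf{x}}{2}^4$ factor. Your explicit justification that the rank-one TT structure makes each $\mathbf{y}(i)$ a product of $d$ linear forms in disjoint independent variables is a welcome elaboration of a step the paper merely asserts.
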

\begin{proof}
According to Theorem \ref{lemma_mean}, $	\mathbb{E}{\Arrowvert f_{TTRP}(\mathbf{x})\Arrowvert}^{2}_2={\Arrowvert \mathbf{x} \Arrowvert}^2_2$. Since ${\Arrowvert f_{TTRP}(\mathbf{x})\Arrowvert}^{2}_2$ is a polynomial of degree $2d$ of independent standard Gaussian or Radamecher random variables, which are the entries of TT-cores $\mathcal{R}_i$, for $i=1,\dots,d$, we apply Proposition \ref{hyper} and Theorem \ref{lemma_var} to obtain
{\lr
\begin{align*}
		\mathbb{P} \left ( \left\lvert \norm{f_{TTRP}(\mathbf{x})}{2}^2 - \norm{\mathbf{x}}{2}^2 \right\rvert \geq \varepsilon \norm{\mathbf{x}}{2}^2 \right ) &\leq e^2\cdot \exp{\left[-{\left(\frac{\varepsilon^2 \norm{\mathbf{x}}{2}^4}{K\cdot \text{Var}\left(\norm{f_{TTRP}(\mathbf{x})}{2}^2\right)}\right)}^\frac{1}{2d}\right]}\\
		&\leq e^2\cdot \exp{\left[-\left(\frac{\varepsilon^2 }{K\cdot\left[\frac{1}{M}\left(\Delta+n(m+2)-3\right)^d N\frac{\mathcal{M}^4}{\norm{\mathbf{x}}{2}^4}-1\right]}\right)^\frac{1}{2d}\right]}\\
		&\leq e^2\cdot \exp{\left[-\left(\frac{M\cdot\varepsilon^2 }{K\cdot\left[\left(\Delta+n(m+2)-3\right)^d N-M\right]}\right)^\frac{1}{2d}\right]}\\
		&\leq C \exp{\left[-\left(\frac{M\cdot\varepsilon^2 }{K\cdot\left[\left(\Delta+n(m+2)-3\right)^d N-M\right]}\right)^\frac{1}{2d}\right] },
\end{align*}
where  $\mathcal{M}=\max_{i=1,\dots,N} \ \lvert \mathbf{x}(i)\rvert$ and then $\frac{\mathcal{M}^4}{\norm{\mathbf{x}}{2}^4}\leq 1$.
}
\end{proof}
We note that the upper bound in the concentration inequality \eqref{prop2_ineq} is not tight, as it involves the dimensionality of datasets ($N$).
To give a tight bound independent of the dimensionality of datasets 
for the corresponding concentration inequality is our future work.

The procedure of TTRP is summarized in Algorithm \ref{alg_1}. 
For the input of this algorithm, the TT-ranks of $\mathcal{R}$ (the tensorized version of  the projection matrix $\mathbf{R}$ in  \eqref{eq_ttfpfull}) are set to  one, and from our above analysis, we generate entries of the corresponding TT-cores $\{\mathcal{R}_{k}\}^d_{k=1}$ through the Rademacher distribution. 
For a given data point $\mathbf{x}$ in the TT-format, Algorithm \ref{alg_1} gives the TT-cores of the corresponding output, and  each element of $f_{TTRP}(\mathbf{x})$ in \eqref{eq_ttfpfull} can be represented as:
 $$f_{TTRP}(\mathbf{x})(i)=f_{TTRP}(\mathbf{x})(\nu(i))=f_{TTRP}(\mathbf{x})(i_1,\dots,i_d)=\frac{1}{\sqrt{M}}\mathcal{Y}_1(i_1)\cdots\mathcal{Y}_d(i_d),$$
 where $\nu$ is a bijection from $\Ne $ to $\Ne^{d}$.
 %Our analysis leads to Algorithm \ref{alg_1}, and each element of $f_{TTRP}(\mathbf{x})$ can be represented by the outputs of Algorithm \ref{alg_1}:
 %$$f_{TTRP}(\mathbf{x})(i)=f_{TTRP}(\mathbf{x})(\nu(i))=f_{TTRP}(\mathbf{x})(i_1,\dots,i_d)=\frac{1}{\sqrt{M}}\mathcal{Y}_1(i_1)\cdots\mathcal{Y}_d(i_d),$$
 %where $\nu$ is a bijection from $\Ne $ to $\Ne^{d}$.
\begin{algorithm}[H]
	\caption{Tensor train random projection}
	\label{alg_1}
	\begin{algorithmic}[1]
		\Require TT-cores $\mathcal{R}_{k}\left(i_{k}, j_{k}\right)$ of  $\mathbf{R}$, and TT-cores $\mathcal{X}_{k}$ of  $\mathbf{x}$, for $k=1,\dots,d$.
		\For {$k = 1:d$}
		\For {$i_k=1:m_k$}
		\State Compute $\mathcal{Y}_{k}\left(i_{k}\right)=\sum_{j_{k}=1}^{n_k}\Big(\mathcal{R}_{k}\left(i_{k}, j_{k}\right) \otimes \mathcal{X}_{k}\left(j_{k}\right)\Big)$. $\qquad \triangleright \ O(n\hat{r}^2)$ by \eqref{ycore}
		\EndFor
		\EndFor
	%	\State $\mathbf{y}$ in the TT-format : $\mathcal{Y}_1$, $\mathcal{Y}_2,\dots,$ $\mathcal{Y}_d$.
		\Ensure TT-cores $\frac{1}{\sqrt{M}}\mathcal{Y}_1$, $\mathcal{Y}_2,\dots,$ $\mathcal{Y}_d$.
	%$f_{TTRP}(\mathbf{x})=\frac{1}{\sqrt{M}}\mathbf{y}$ in the TT-format.
	\end{algorithmic}
\end{algorithm}

%$\mathbf{y}(i)=\mathcal{Y}(i_1,\dots,i_d)=\mathcal{Y}_1(i_1)\cdots\mathcal{Y}_d(i_d)$. 
%Apply Algorithm \ref{dot} to efficiently compute the ratio of the pairwise distance
%		$\frac{2}{n(n-1)}\sum_{n\geq i > j}\frac{{\Arrowvert  f_{TTRP}(\mathbf{x}^{(i)}) - f_{TTRP}(\mathbf{x}^{(j)}) \Arrowvert}_2}{{\Arrowvert \mathbf{x}^{(i)}-\mathbf{x}^{(j)}\Arrowvert}_2}$.

%\rev{In section \ref{Experm}, it turns out that TTRP is the JL transformation because it keeps the isometry property in high probability.}

\section{Numerical experiments}\label{Experm}
We demonstrate the efficiency of TTRP using synthetic datasets and the MNIST dataset \cite{lecun2010mnist}. 
%The storage, the computational costs and  the quality of isometry and variance of TTRP are assessed. 
%The storage, the computational costs and  the quality of isometry and variance of TTRP are assessed. 
The quality of isometry is a key factor to assess the performance of random
projection methods, %\cite{sun2018tensor}, 
which in our numerical studies is
estimated by the ratio of the pairwise distance
	\begin{equation}\label{ratio}
	\frac{2}{n_0(n_0-1)}\sum_{n_0\geq i > j}\frac{{\Arrowvert  f_{TTRP}(\mathbf{x}^{(i)}) - f_{TTRP}(\mathbf{x}^{(j)}) \Arrowvert}_2}{{\Arrowvert \mathbf{x}^{(i)}-\mathbf{x}^{(j)}\Arrowvert}_2},
	\end{equation}
	where $n_0$ is the number of data points. Since the output of our TTRP procedure (see Algorithm \ref{alg_1}) is in the TT-format, it is efficient to  apply TT-format operations to compute the pairwise distance of \eqref{ratio} through
	Algorithm \ref{dot}. In order to obtain the average performance of isometry, we repeat numerical experiments 100 times (different realizations for TT-cores)
	%we take 100 data points
	%of random projections 
	and	estimate the mean and the variance for the ratio of the pairwise distance using these samples.
The rest of this section is organized as follows. First, 
through a synthetic dataset, 
the effect of different TT-ranks of the tensorized version $\mathcal{R}$ of $\mathbf{R}$ in \eqref{eq_ttfpfull} is shown, 
which leads to our motivation of setting the TT-ranks to be one.
After that, we focus on the situation
with TT-ranks equal to one, and test the effect of different TT-cores. Finally, based on both high-dimensional synthetic and  MNIST datasets, our TTRP are compared with  related projection methods, including Gaussian TRP \cite{sun2018tensor}, Very Sparse RP \cite{li2006very} and Gaussian RP \cite{achlioptas2001database}.

%	In this section, we first consider the effect of different TT-ranks of the
%	tensorized version $\mathcal{R}$ of $\mathbf{R}$ in \eqref{eq_ttfpfull}.
%	Through evaluating the storage, the computational costs and  the quality of isometry and variance, we compare TTRP with the following three competitors: (1) {Gaussian TRP}, (2) {Very Sparse RP}, (3) {Gaussian RP}. In the following numerical experiments, the performance of isometry is  estimated by the ratio of the pairwise distance
%	\begin{equation}\label{ratio}
%	\frac{2}{n_0(n_0-1)}\sum_{n_0\geq i > j}\frac{{\Arrowvert  f_{TTRP}(\mathbf{x}^{(i)}) - f_{TTRP}(\mathbf{x}^{(j)}) \Arrowvert}_2}{{\Arrowvert \mathbf{x}^{(i)}-\mathbf{x}^{(j)}\Arrowvert}_2},
%	\end{equation}
%	where $n_0$ is the number of data points. Since the output of tensor train random projection in Algorithm \ref{alg_1} is in the TT-format, it is efficient to  apply the TT-format operation to compute the pairwise distance of \eqref{ratio} through
%	Algorithm \ref{dot}. In order to obtain the average performance of isometry, we take 100 samples of random projections and compute the mean and the variance for the ratio of the pairwise distance.
 
\subsection{Effect of different TT-ranks}\label{section_rank}

In Definition \ref{defTTRP}, we set the TT-ranks to be one.
%(see the setting below \eqref{eq_ttfpfull}).
To explain our motivation of this settting, we investigate the effect of different TT-ranks---we herein consider the situation that the TT-ranks take $r_0=r_d=1,\, r_k=r,\,k=2,\dots,d-1$,
where the rank $r\in \{1,2,\ldots\}$, 
and we keep other settings in Definition \ref{defTTRP} unchanged.
%(other settings are the same as that for  \eqref{eq_ttfpfull}),
For comparison, two different distributions are considered to generate the TT-cores in this part---the Rademacher distribution (our default optimal choice) and the Gaussian distribution, and the corresponding tensor train projection is denoted by  rank-$r$ TTRP and Gaussian TT (studied in detail in \cite{rakhshan2020tensorized}) respectively. 
For rank-$r$ TTRP, 
the entries of TT-cores  $\mathcal{R}_1(i_1,j_1)$ and $\mathcal{R}_d(i_d,j_d)$ are drawn from $1/r^{1/4}$ or $-1/r^{1/4}$ with equal probability, and each element of $\mathcal{R}_k(i_k,j_k),\,k=2,..,d-1$ is uniformly and independently drawn from $1/r^{1/2}$ or $-1/r^{1/2}$.

A synthetic dataset 
with dimension $N=1000$ and size $n_0=10$ are  generated,
where each entry of vectors (each vector is a sample in the synthetic dataset) is independently generated through $\mathcal{N}(0,1)$.
In this test problem, we set the reduced dimension to be 
$M=24$, and the  dimensions of the corresponding tensor representations are set to $m_1=4,\,m_2=3,\,m_3=2$ and $n_1=n_2=n_3=10$ ($M=m_1m_2m_3$ and $N=n_1n_2n_3$). 
Figure \ref{rttrp} shows the ratio of the pairwise distance of the two  projection methods (computed through \eqref{ratio}).
It can be seen that the estimated mean of ratio of the pairwise distance of rank-$r$ TTRP is typically more close to one than that of Gaussian TT, i.e., rank-$r$ TTRP
has advantages for keeping the pairwise distances. 
Clearly, for a given rank in Figure \ref{rttrp}, the estimated variance of the pairwise distance of rank-$r$ TTRP is 
%at least one order of magnitude smaller 
 smaller 
than that of Gaussian TT. 
Moreover, focusing on rank-$r$ TTRP,  the results of both the mean and the variance are not significantly different for different TT-ranks. In order to reduce the storage, we only focus on the rank-one case (as in Definition \ref{defTTRP}) in the rest of this paper.
\begin{figure}
    \centering
	\subfloat[][Mean for the ratio of the pairwise distance ]{\includegraphics[width=.48\textwidth]{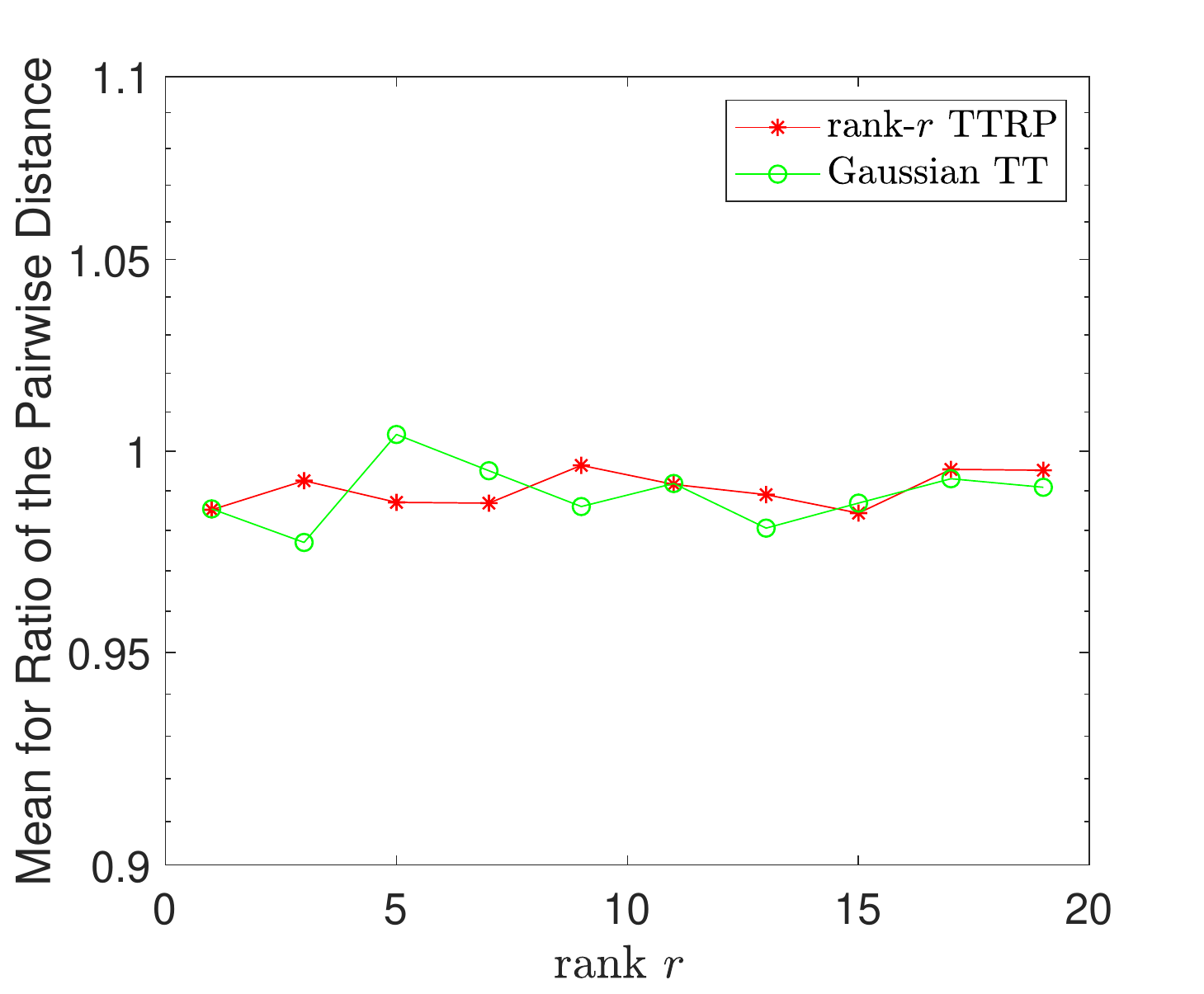}}\quad
	\subfloat[][Variance for the ratio of the pairwise distance]{\includegraphics[width=.48\textwidth]{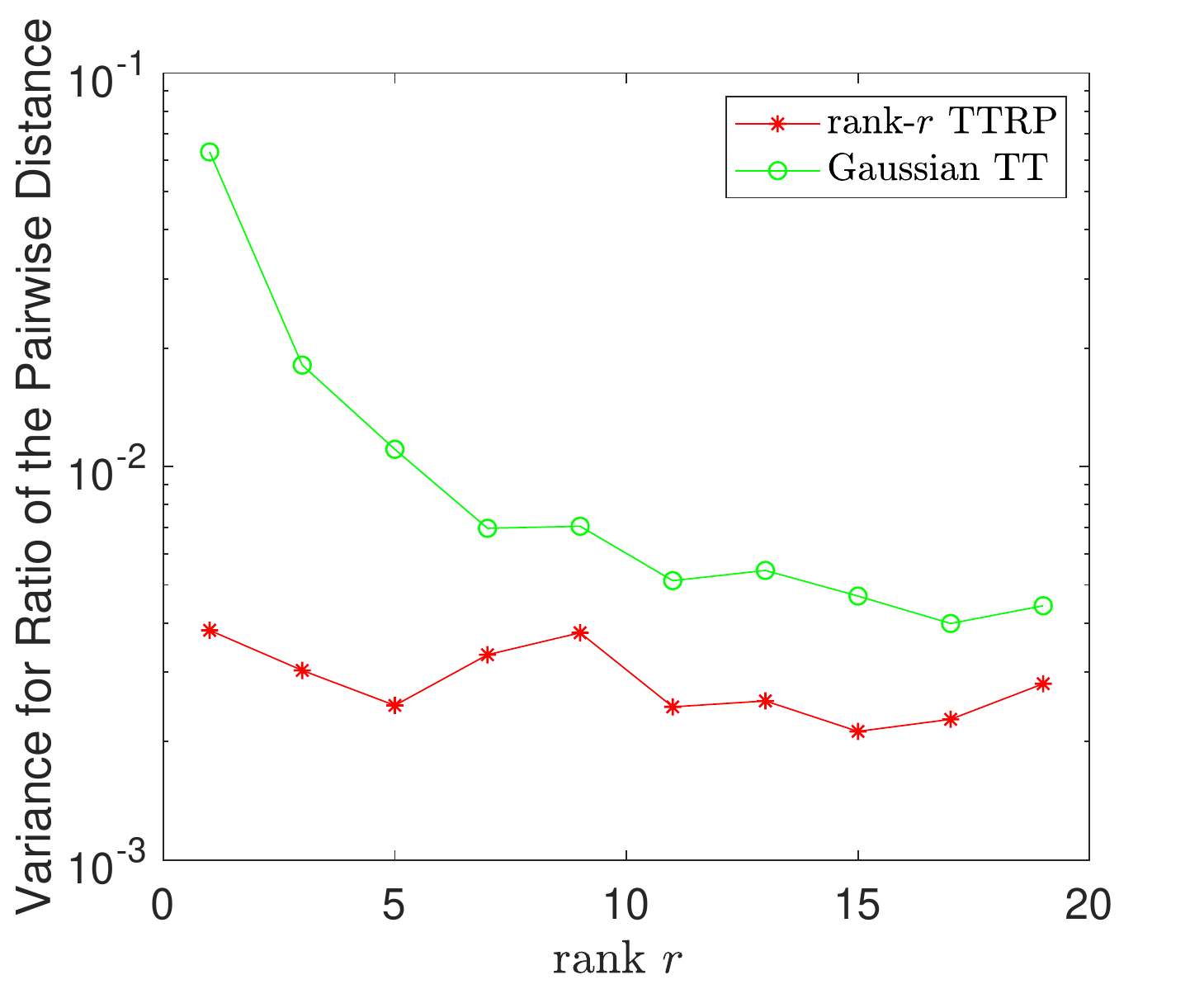}}
    \caption{Effect of different ranks based on synthetic data ($M=24,\,N=1000,\,m_1=4,\,m_2=3,\,m_3=2,\,n_1=n_2=n_3=10$).}
    \label{rttrp}
\end{figure}

% \begin{figure}[H]
% 		\centering
% 		\includegraphics[width=2.3in,height=2.1in]{image/mrttrp1000.eps}
% 		\includegraphics[width=2.3in,height=2.1in]{image/vrttrp1000.eps}
% 		\caption{Effect of different ranks based on synthetic data ($M=24,\,N=1000,\,m_1=4,\,m_2=2,\,m_3=3,\,n_1=n_2=n_3=10$).}
% 		\label{rttrp}
% \end{figure}
\subsection{Effect of different TT-cores}\label{cores}
%To assess the effect of different distributions for TT-cores, we use 
A synthetic dataset
is tested to assess the effect of different distributions for TT-cores, which consists of independent vectors $\mathbf{x}^{(1)},\dots,\mathbf{x}^{(10)},$ with dimension $N=2500$, whose elements are sampled from the standard Gaussian distribution. 
%To keep the isometry property, we investigate the following three distributions  
The following three distributions 
are investigated
to construct TTRP (see Definition \ref{defTTRP}), which include the Rademacher distribution (our default choice), the standard Gaussian distribution (studied in \cite{rakhshan2020tensorized}), and the $1/3$-sparse distribution (i.e., $s=3$ in \eqref{sparse_distribution}), while the corresponding projection methods
are denoted by TTRP-RD, TTRP-$\mathcal{N}(0,1)$, and TTRP-$1/3$-sparse, respectively.
For this test problem, three TT-cores are  utilized for $m_1=M/2,\,m_2=2,\,n_3=1$ and $n_1=25,\,n_2=10,\,n_3=10$.
Figure \ref{core} shows that the estimated mean of the ratio of the pairwise distance for TTRP-RD is
very close to one, and the estimated variance of TTRP-RD is 
%much smaller 
at least one order of magnitude smaller
than that of TTRP-$\mathcal{N}(0,1)$ and TTRP-$1/3$-sparse.
These results are consist with Theorem \ref{lemma_var}. In the rest of this paper, we focus on our default choice 
for TTRP---the TT-ranks are set to one, and each element of TT-cores is independently sampled through the Rademacher distribution.

%with the minimum variance approximately equals 1, 
%while TTRP-$\mathcal{N}(0,1)$ and TTRP-$1/3$-sparse have much larger variances,
%which is consistent with Theorem \ref{lemma_var}.
\begin{figure}
    \centering
	\subfloat[][Mean for the ratio of the pairwise distance ]{\includegraphics[width=.48\textwidth]{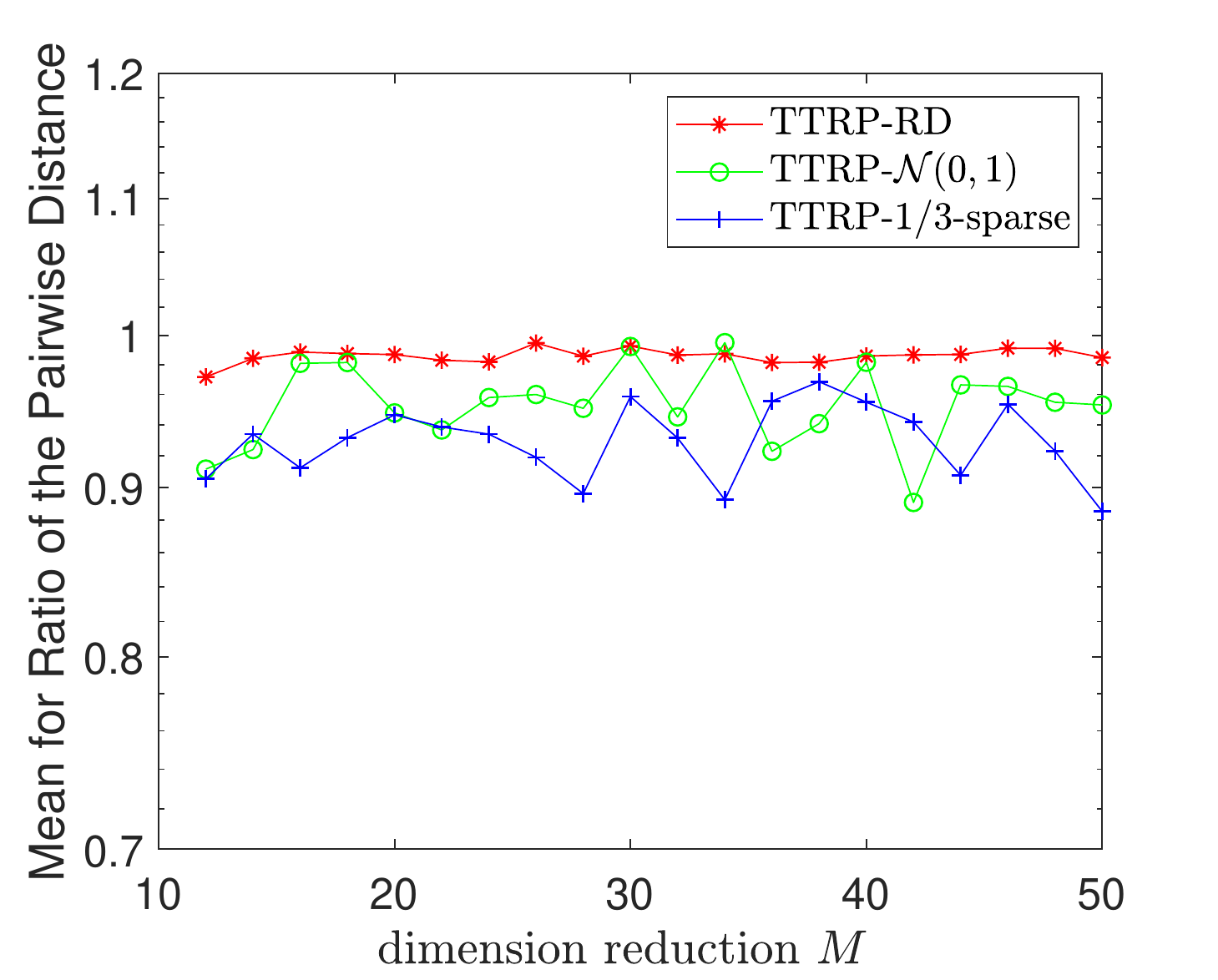}}\quad
	\subfloat[][Variance for the ratio of the pairwise distance]{\includegraphics[width=.48\textwidth]{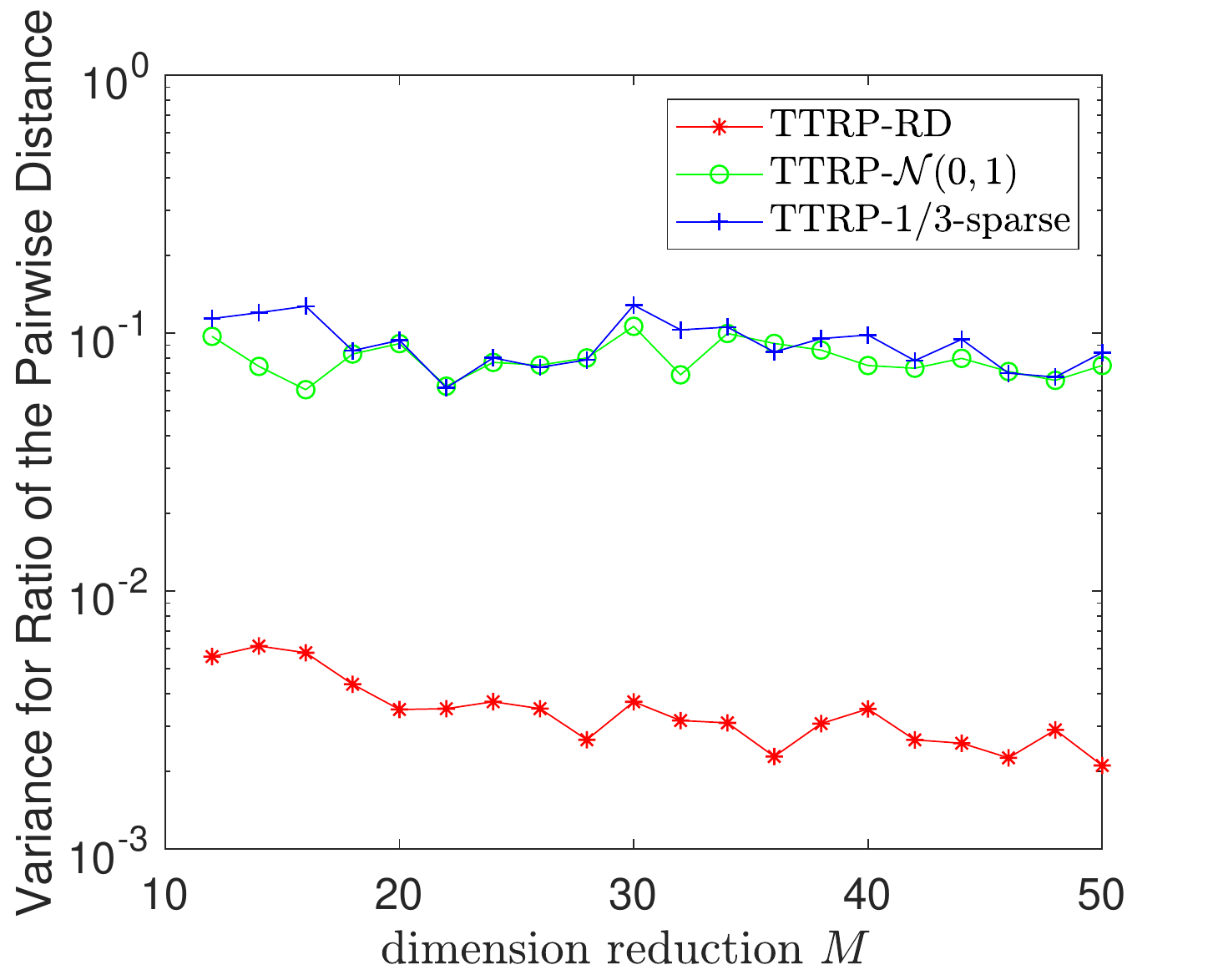}}
    \caption{Three test distributions for TT-cores based on synthetic data ($N=2500$).}
    \label{core}
\end{figure}
% 	\begin{figure}[H]
% 		\centering
% 		\includegraphics[width=2.3in,height=2.1in]{image/meanforcore.eps}
% 		\includegraphics[width=2.3in,height=2.1in]{image/varianceforcore.eps}
% 		\caption{Three test distributions for TT-cores based on synthetic data ($N=2500$).}
% 		\label{core}
% 	\end{figure}
	
\subsection{Comparison with Gaussian TRP, Very Sparse RP and Gaussian RP}

The storage of the projection matrix  and
the cost of computing  $\mathbf{Rx}$ (see \eqref{eq_ttfpfull}) 
of our TTRP (TT-ranks equal one),
Gaussian TRP \cite{sun2018tensor}, Very Sparse RP \cite{li2006very} and Gaussian RP \cite{achlioptas2001database}, are shown in Table \ref{storage},  
where $\mathbf{R}\in \mathbb{R}^{M\times N},\,M=\prod_{i=1}^{d} m_i,\,N=\prod_{j=1}^{d} n_j,\,m=\max\{m_1,m_2,\dots,m_d\}$ and $n=\max\{n_1,n_2,\dots,n_d\}$.
%Note that the storage and the computational cost of Gaussian TRP and TTRP are in the tensor format.
Note that the matrix $\mathbf{R}$ in \eqref{eq_ttfpfull} is tensorized in the TT-format,  %$\mathcal{R}$ of $\mathbf{R}$ in \eqref{eq_ttfpfull} 
%Note that the tensorized matrix $\mathbf{R}$ are in the tensor format, 
and TTRP is efficiently achieved by the matrix-by-vector products in the TT-format (see \eqref{ycore}).
From Table \ref{storage}, it is clear that our TTRP has the smallest storage cost and requires the smallest computational cost for computing $\mathbf{Rx}$.  
%than Gaussian RP and Gaussian TRP. 
%First, it shows that in Table \ref{storage}, our TTRP has the smallest storage and requires less cost of computing $\mathbf{Rx}$ than Gaussian RP and Gaussian TRP, where $\mathbf{R}\in \mathbb{R}^{M\times N},\,M=\prod_{i=1}^{d} m_i,\,N=\prod_{j=1}^{d} n_j,\,m=\max\{m_1,m_2,\dots,m_d\},\, n=\max\{n_1,n_2,\dots,n_d\}$.
%four projection methods vary greatly in the storage and computation cost in Table \ref{storage}. It can be seen that the TTRP has the smallest storage and requires less cost than the Gaussian RP and the Gaussian TRP.
	\begin{table}
		\caption{The comparison of the storage and the computational costs.}
		\label{storage}
		\centering
		\begin{tabular}{@{}lllll@{}}
			\toprule
			%\cmidrule(r){1-2}
			& Gaussian RP & Very Sparse RP  & Gaussian TRP    & TTRP \\
			\midrule
			Storage cost & $O(MN) $     & $O(M\sqrt{N})$  & $O(dMn)$         & $O(dmn)$   \\
			Computational cost & $O(MN)$  & $O(M\sqrt{N})$  & $O(MN)$          & $O(dmn\hat{r}^2)$ \\
			\bottomrule
		\end{tabular}
	\end{table}
 
%	To assess the efficiency of our TTRP, we use the synthetic dataset (see section \ref{cores}) to compare it with conventional Gaussian RP, the very sparse RP (low storage), and Gaussian TRP. Notably, three TT-cores and four TT-cores are applied by $n_1=25,\,n_2=10,\,n_3=10$ and $n_1=n_2=n_3=n_4=10$, respectively.
%	Figure \ref{s2500} and Figure \ref{s10000} show that TTRP performs nearly as well as very sparse RP and  Gaussian RP, while the variance for  Gaussian TRP is larger than that for the other three projections. Moreover, the variance for TTRP reduces as the dimension $M$ increases, which is consistent with Theorem \ref{lemma_var} . To be further, more details are given when $M=24$ and $N=10000$ in Table \ref{estorage1} and Table \ref{estorage2}. It turns out that TTRP with fewer storage achieves a competitive performance compared with very sparse RP and Gaussian RP. When encountering a high order TT-format matrix $M\times N=2^{10}\times 5^{10} \ (d=10)$, our method still can work with mean $0.9988$ and variance $3.2244 \times 10^{-5}$, but Gaussian TRP, Very Sparse RP and Gaussian RP don't work due to enormous storages and computational costs.

Two synthetic datasets with size $n_0=10$ are tested---the dimension of the first one is $N=2500$ and that of the second one is $N=10^4$;
each entry of the samples is independently generated through $\mathcal{N}(0,1)$. 
For TTRP and Gaussian TRP, the  dimensions of tensor representations are set to: for $N=2500$, we set $n_1=25,\,n_2=10,\,n_3=10,\,m_1=M/2,\,m_2=2,\,m_3=1$; for $N=10^4$, we set
 $n_1=n_2=25,\,n_3=n_4=4,\,m_1=M/2,\,m_2=2,\,m_3=1,\,m_4=1$. 
 We again focus on the  ratio of the pairwise distance (putting the outputs of different projection methods into \eqref{ratio}), and estimate the mean and the variance for the ratio of the pairwise distance through repeating numerical experiments 100 times (different realizations for constructing the random projections, e.g., different realizations of the Rademacher distribution for TTRP).

Figure \ref{s2500} shows that the performance of TTRP is very close to that of sparse RP and  Gaussian RP, while the variance for  Gaussian TRP is larger than that for the other three projection methods. Moreover, the variance for TTRP basically reduces as the dimension $M$ increases, which is consistent with Theorem \ref{lemma_var}. To be further, more details are given for the case with $M=24$ and $N=10^4$ in Table \ref{estorage1} and Table \ref{estorage2}, where the value of storage is the number of nonzero entries that need to be stored.  
It turns out that TTRP with fewer storage costs achieves a competitive performance compared with Very Sparse RP and Gaussian RP. In addition, from Table \ref{estorage2}, for $d>2$, the variance of TTRP is clearly smaller than that of Gaussian TRP, and the 
storage cost of TTRP is much smaller than that of Gaussian TRP.
%When encountering a high order TT-format matrix $M\times N=2^{10}\times 5^{10} \ (d=10)$, our method still can work with mean $0.9988$ and variance $3.2244 \times 10^{-5}$, but Gaussian TRP, Very Sparse RP and Gaussian RP don't work due to enormous storages and computational costs.
\begin{figure}
	\centering
	\subfloat[][Mean, $N=2500$.]{\includegraphics[width=.48\textwidth]{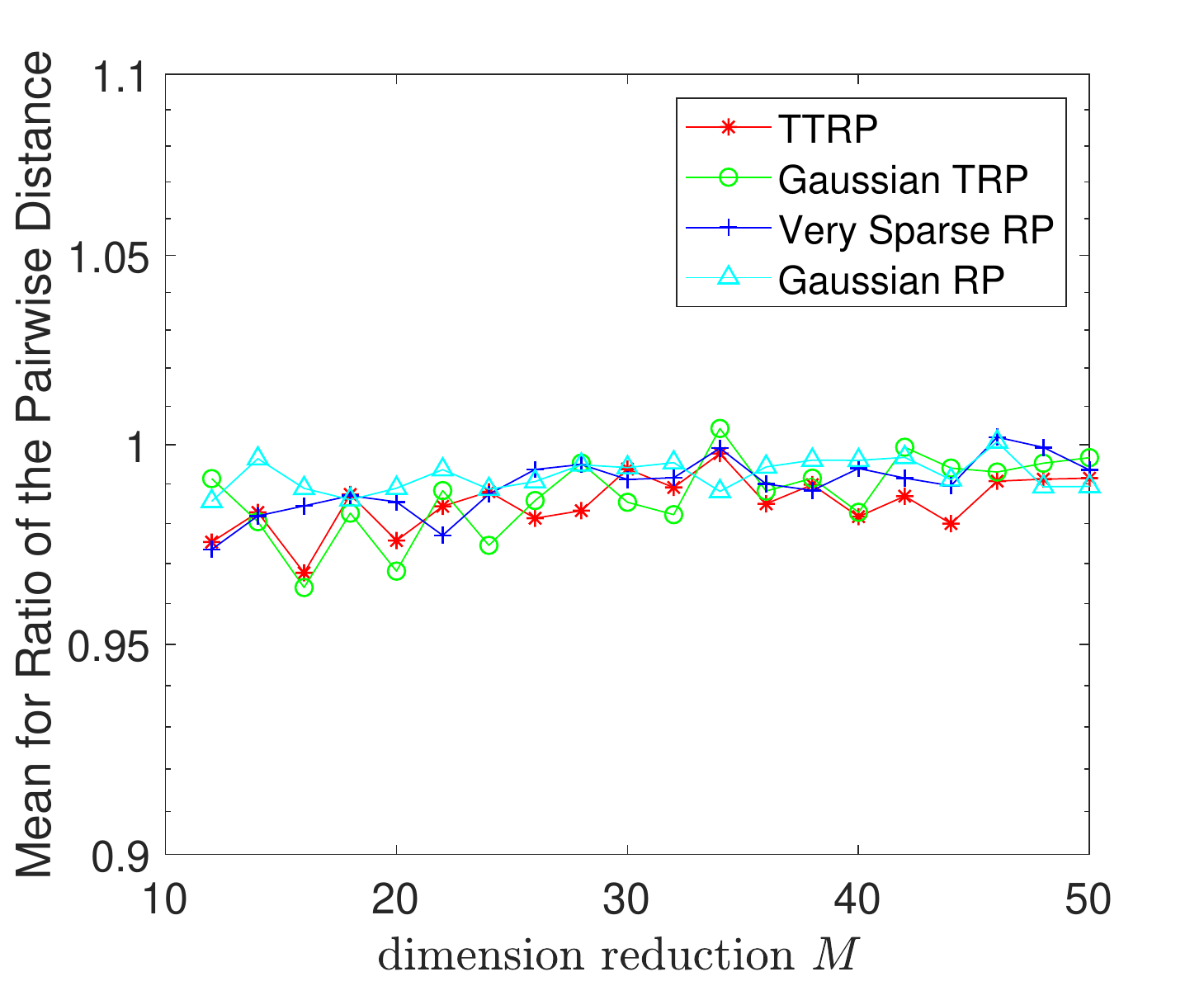}}\quad
	\subfloat[][Variance, $N=2500$.]{\includegraphics[width=.48\textwidth]{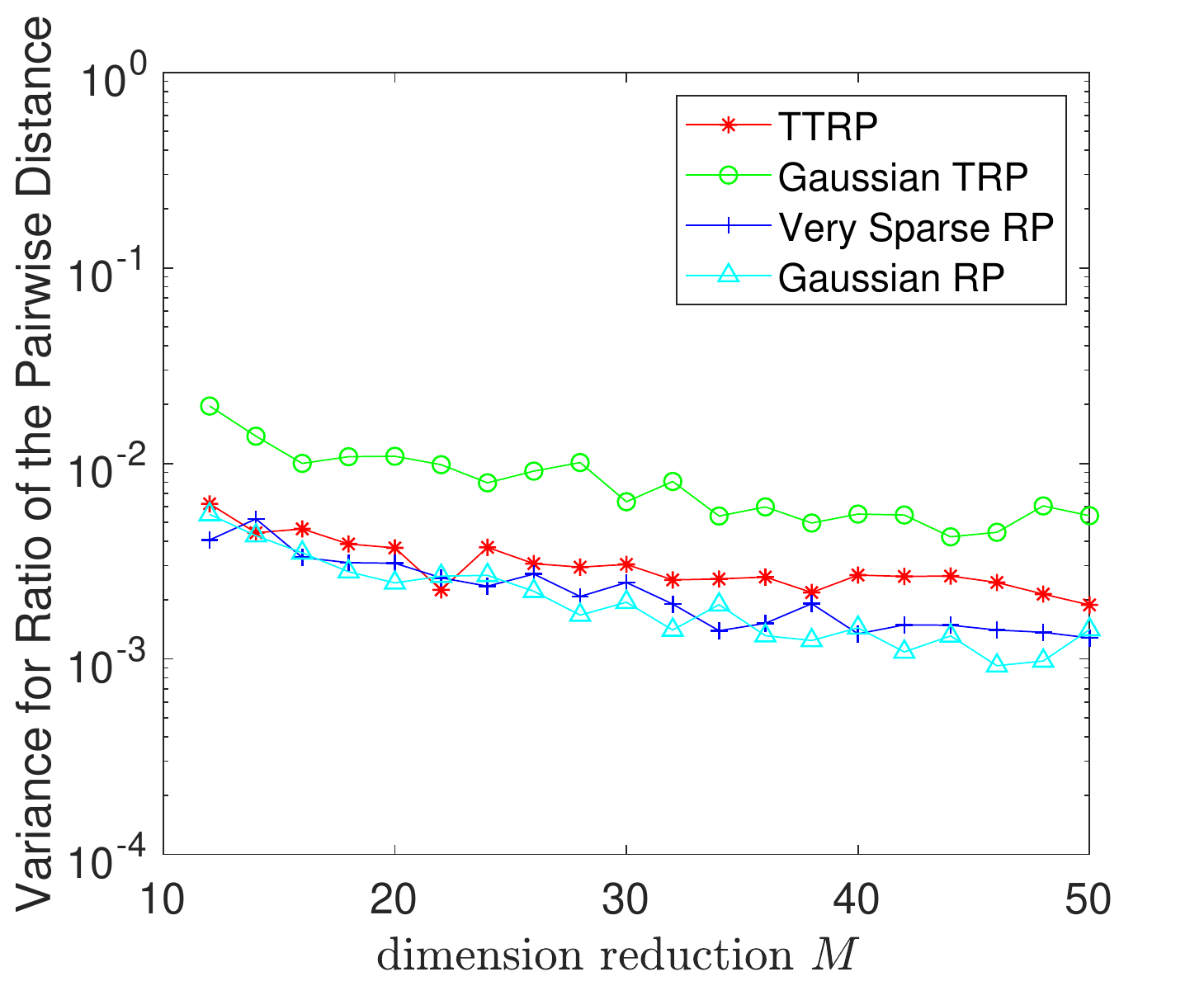}}
	\\
	\subfloat[][Mean, $N=10^4$.]{\includegraphics[width=.48\textwidth]{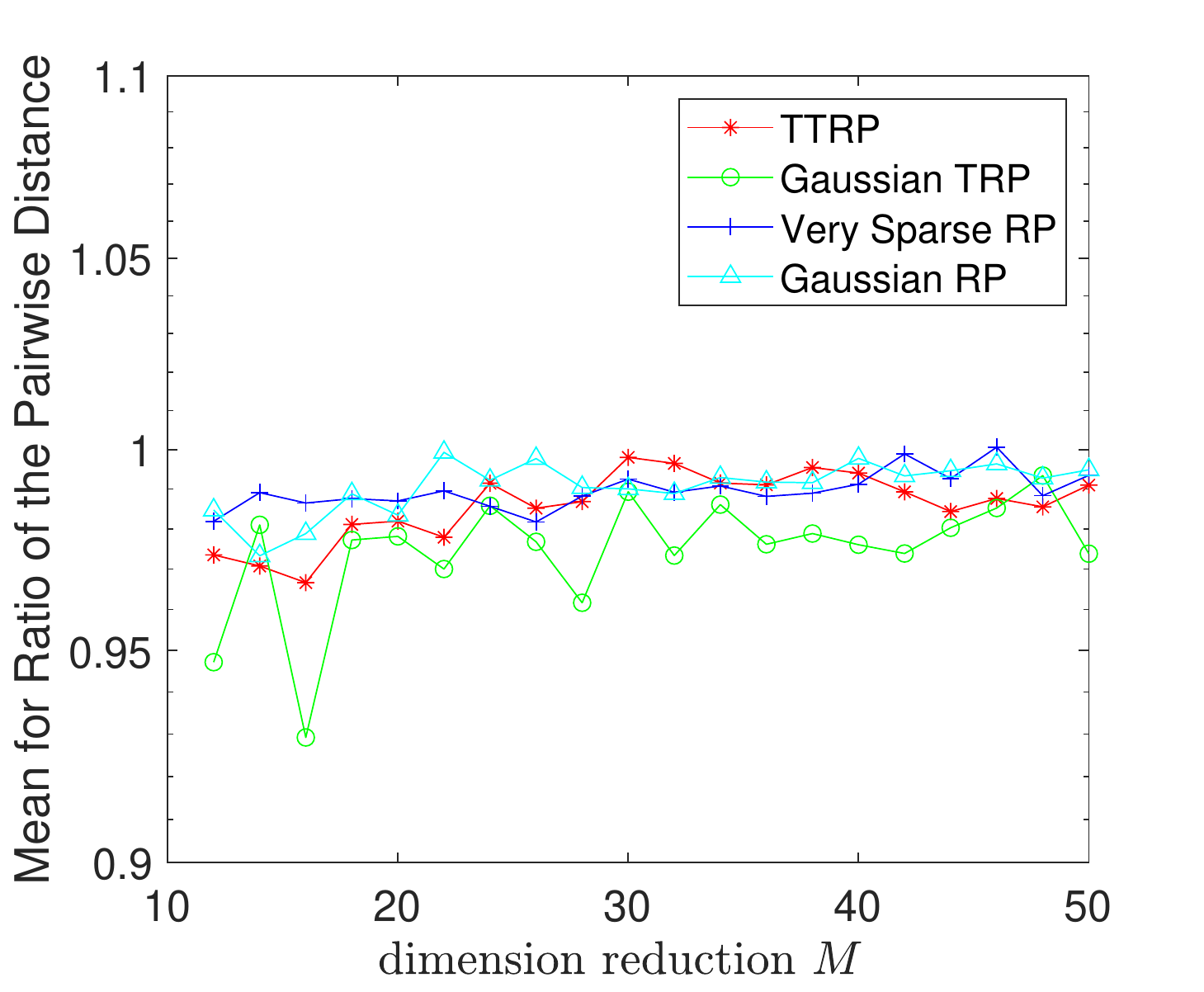}}\quad
	\subfloat[][Variance, $N=10^4$.]{\includegraphics[width=.48\textwidth]{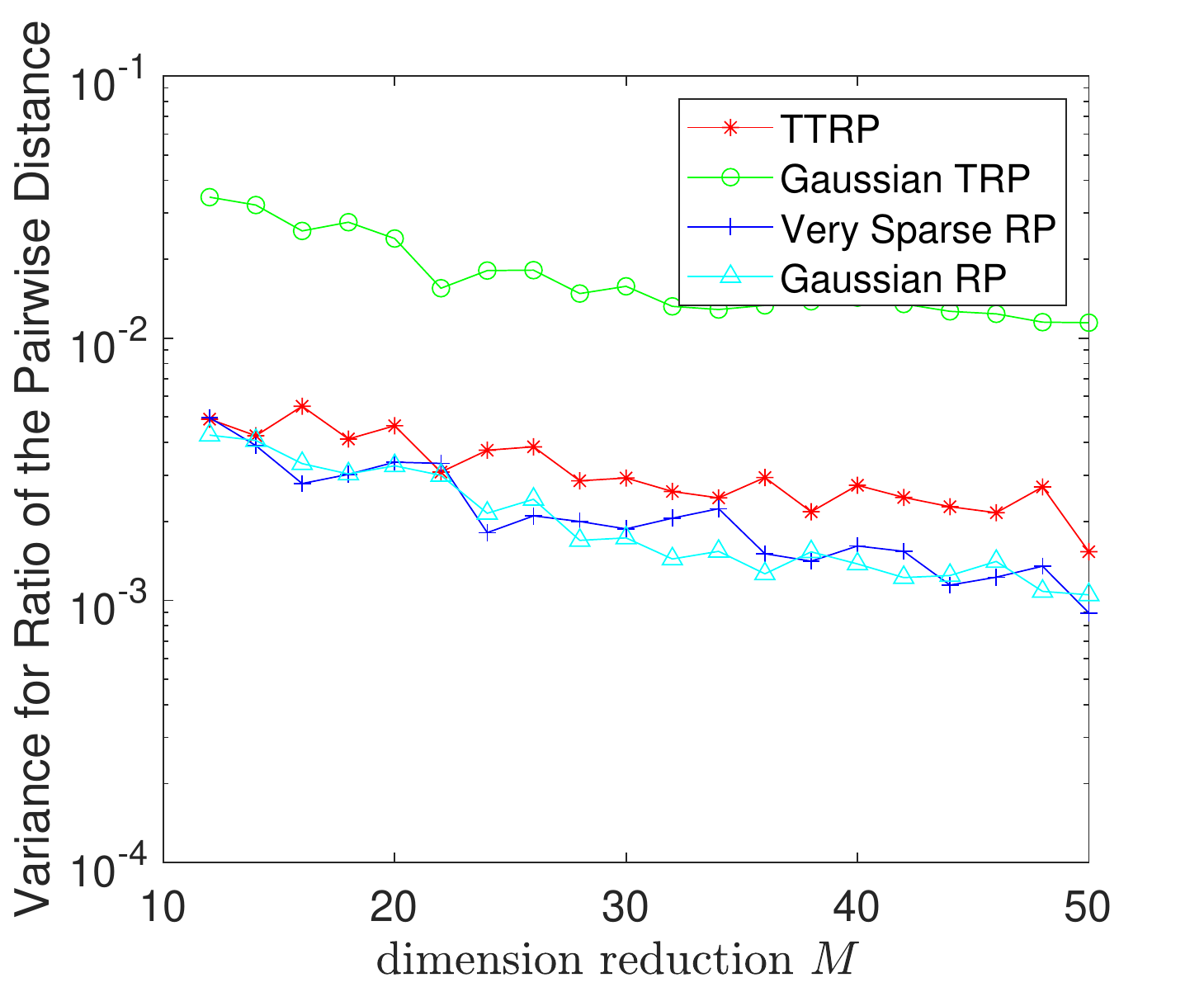}}
	%\caption{Isometry and variance quality for synthetic data.}
	\caption{Mean and variance for the ratio of the pairwise distance,  synthetic data.}
    \label{s2500}
\end{figure}
	\begin{table}
		\caption{The comparison of mean and variance for the ratio of the pairwise distance, and storage, for Gaussian RP and Very Sparse RP ($M=24$ and $N=10^4$).}
		\label{estorage1}
		\centering
		\begin{tabular}{c|c|c|c|c|c}
			\hline
			\multicolumn{3}{c|}{Gaussian RP} & \multicolumn{3}{|c}{Very Sparse RP}\\  
			\hline
			mean&variance&storage&mean&variance&storage\\
			\hline
			0.9908&0.0032&240000&0.9963&0.0025&2400\\
			\hline
		\end{tabular}
	\end{table}
	\begin{table}
		\caption{
		The comparison of mean and variance for the ratio of the pairwise distance, and storage, for Gaussian TRP and TTRP ($M=24$ and $N=10^4$).}
		\label{estorage2}
		\centering
		\begin{tabular}{c|c|c|c|c|c|c|c}
			\hline
			\multicolumn{2}{c|}{Dimensions for tensorization}&\multicolumn{3}{c|}{Gaussian TRP} & \multicolumn{3}{|c}{TTRP}\\  
			\hline
			$[m_1,\ldots,m_d]$ & $[n_1,\dots,n_d]$ & mean & variance & storage & mean & variance & storage\\
			\hline
			[6,4] & [100,100] & 0.9908 & 0.0026 & 4800 & 0.9884 & 0.0026 & 1000\\
			\hline
			[4,3,2]&[25,20,20]& 0.9747 & 0.0062 & 1560 & 0.9846& 0.0028 & 200\\
			\hline
			[3,2,2,2] & [10,10,10,10] & 0.9811 & 0.0123 & 960 & 0.9851& 0.0035 & 90\\
			\hline
		\end{tabular}
	\end{table}

Next the CPU times for projecting a data point using the four 
 methods (TTRP, Gaussian TRP, Very Sparse RP and Gaussian RP) are assessed. Here, we
set the reduced dimension $M=1000$, and test four cases with $N=10^4$, $N=10^5$, $N=2\times 10^4$ and $N=10^6$ respectively.
The dimensions of the tensorized output 
is set to $m_1=m_2=m_3=10$ (such that $M=m_1m_2m_3$), and the dimensions of the corresponding tensor representations of the 
original data points are set to: 
for $N=10^4$,
$n_1=25,\,n_2=25,\,n_3=16$;
for $N=10^5$, $n_1=50,\,n_2=50,\,n_3=40$; 
for $N=2\times 10^5$, $n_1=80,\,n_2=50,\,n_3=50$; 
for $N=10^6$, $n_1=n_2=n_3=100$.
For each case, given a data point of which elements are sampled from the standard Gaussian distribution, the simulation of projecting it to the reduced dimensional space  is repeated 100 times (different realizations for constructing the random projections), and the CPU time is defined to be the average time of these 100 simulations. 
Figure \ref{fig_time_comparison} shows the CPU times, where the results are obtained in 
MATLAB on a workstation with Intel(R) Xeon(R) Gold 6130 CPU. It is clear that the computational cost of our TTRP is much smaller than those of Gaussian TRP and Gaussian RP for different data dimension $N$. As the data dimension $N$ increases, the computational costs of Gaussian TRP and Gaussian RP grow rapidly, while the computational cost of our TTRP grows slowly.
When the data dimension is large (e.g., $N=10^6$ in Figure \ref{fig_time_comparison}),
the CPU time of TTRP becomes smaller than that of Very Sparse RP, which is consist with the results in Table \ref{storage}.

	\begin{figure}
		\centering
		\includegraphics[width=3.8in,height=2.6in]{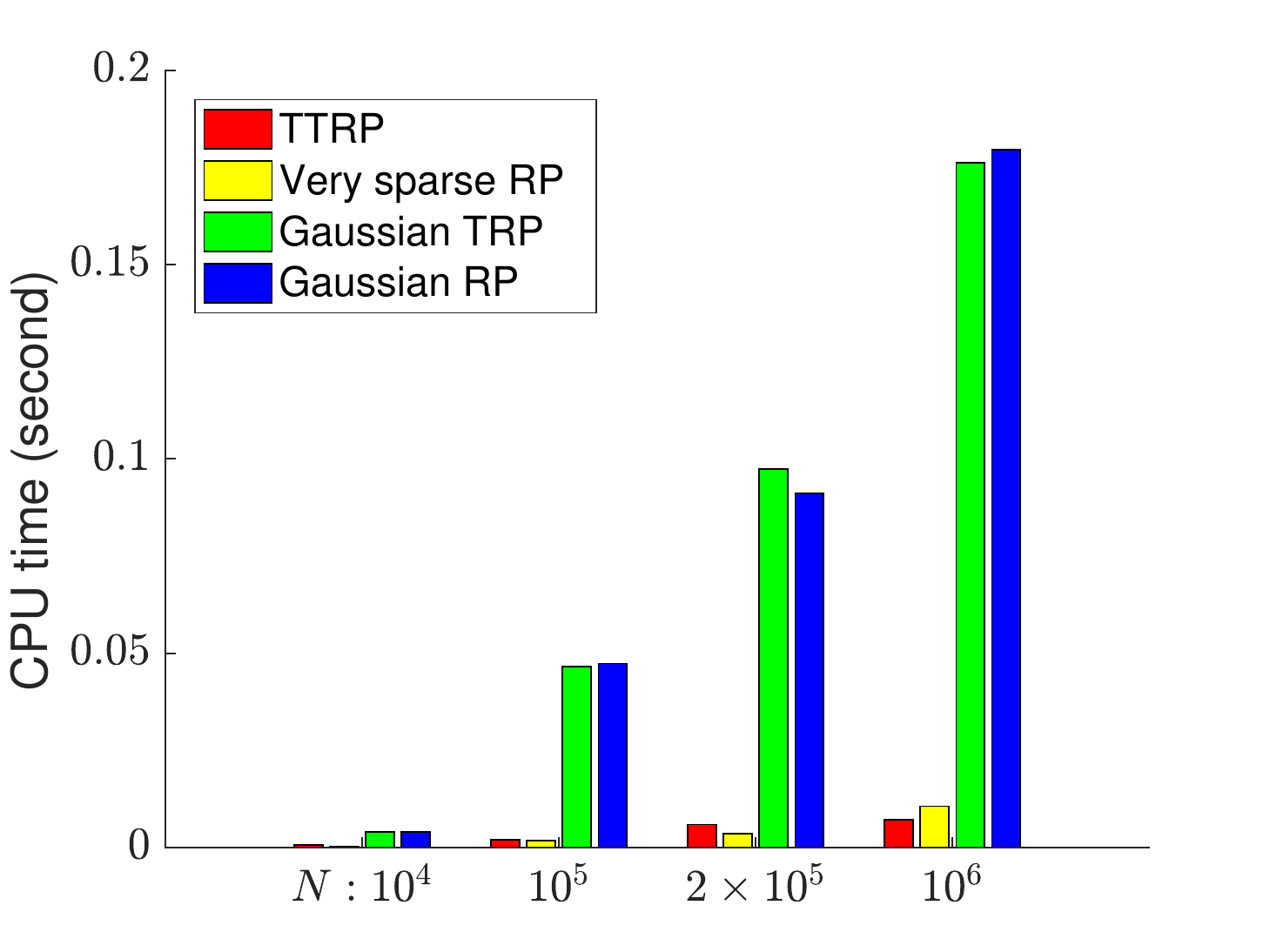}
		\caption{A comparison of CPU time for different random projections ($M=1000$).}
		\label{fig_time_comparison}
	\end{figure}
	%\begin{figure}
	%	\centering
	%	\includegraphics[width=3.1in]{fig/projectiontime.eps}
	%	\caption{A comparison of projection times for different CPU time($M=1000,\, N=10000$).}
	%	\label{fig_projectiontime}
	%\end{figure}
Finally, we validate the performance of our TTRP approach using the MNIST dataset \cite{lecun2010mnist}.
%and CIFAR-10 \cite{krizhevsky2009cifar} datasets. 
From MNIST, we randomly take $n_0=50$ data points, each of which is a vector with dimension $N = 784$. We consider two cases for the dimensions of tensor representations: in the first case, we set $m_1=M/2,\,m_2=2,\,n_1=196,\,n_2=4$,
and in the second case, we set  $m_1=M/2,\,m_2=2,\,m_3=1,\,n_1=49,\,n_2=4,\,n_3=4$. Figure \ref{minist} shows the properties of isometry and bounded variance of different random projections on MNIST. 
It can be seen that TTRP satisfies the isometry property with bounded variance. 
It is clear that as the reduced dimension $M$ increases, the variances of the four methods reduce,
and the variance of our TTRP is close to that of Very Sparse RP.
%is basically smaller than that of Gaussian TRP, while very sparse RP and Gaussian RP have smaller variances than TTRP and TRP. 
%{\lr Note that while the computational cost of TTRP is close to that of the other three methods for this test problem, our TTRP has smaller storage costs than the other three methods (see Table \ref{storage}).}

%, and the has much smaller computational costs
%than  Gaussian RP and Gaussian TRP. 
%The variance reduction of the TTRP is not as good as that of the Gaussian RP and the very sparse RP but better than the Gaussian TRP. 
%However, the TTRP has the smallest storage among these four random projections as discussed above.
%Similarly, the experiments are conducted for CIFAR-10, we again choose $n=50$ vectors from CIFAR-10 dataset. 
% Similarly, for  CIFAR-10, we again choose $n=50$ data points.
% The number of TT-cores is set to three with $n_1=96,\,n_2=32,\,n_3=1$. 
% %Then we plot the quality for transformation of CIFAR-10 image vectors in $\mathbb{R}^{3072}$. 
% Figure \ref{cifar} shows TTRP also keeps the properties of isometry and bounded variance. Although the variance of TTRP is larger that that of Very Sparse RP and Gaussian RP, the storages and computational costs of TTRP are smaller.
%According to Figure \ref{cifar}, it is clear that the TTRP has fine properties of isometry but performance of its variance is not as good as that of the Gaussian RP and the very sparse RP but better than the Gaussian TRP.
\begin{figure}[!ht]
	\centering
	\subfloat[][Mean for the ratio of the pairwise distance ]{\includegraphics[width=.48\textwidth]{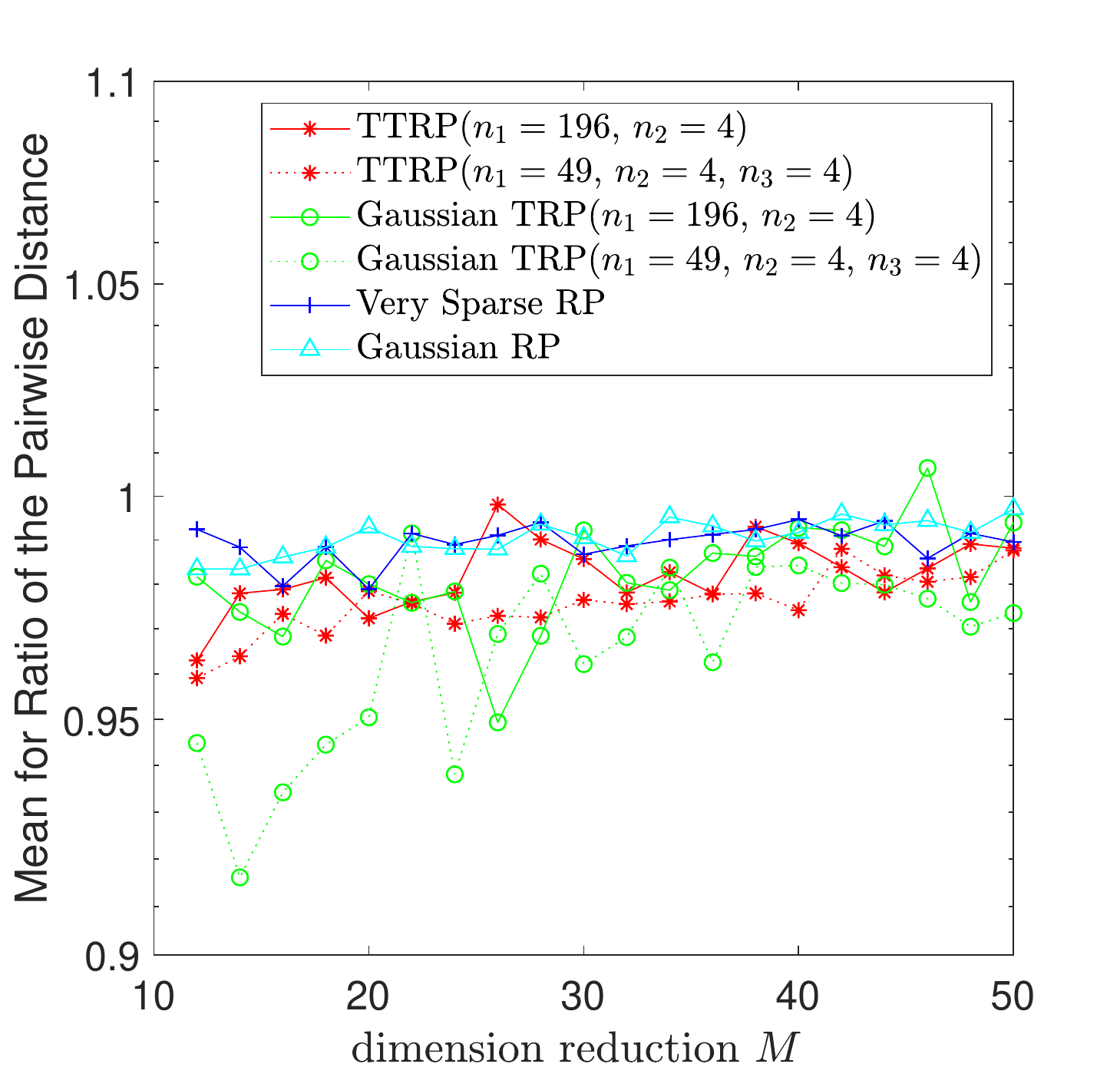}}\quad
	\subfloat[][Variance for the ratio of the pairwise distance]{\includegraphics[width=.48\textwidth]{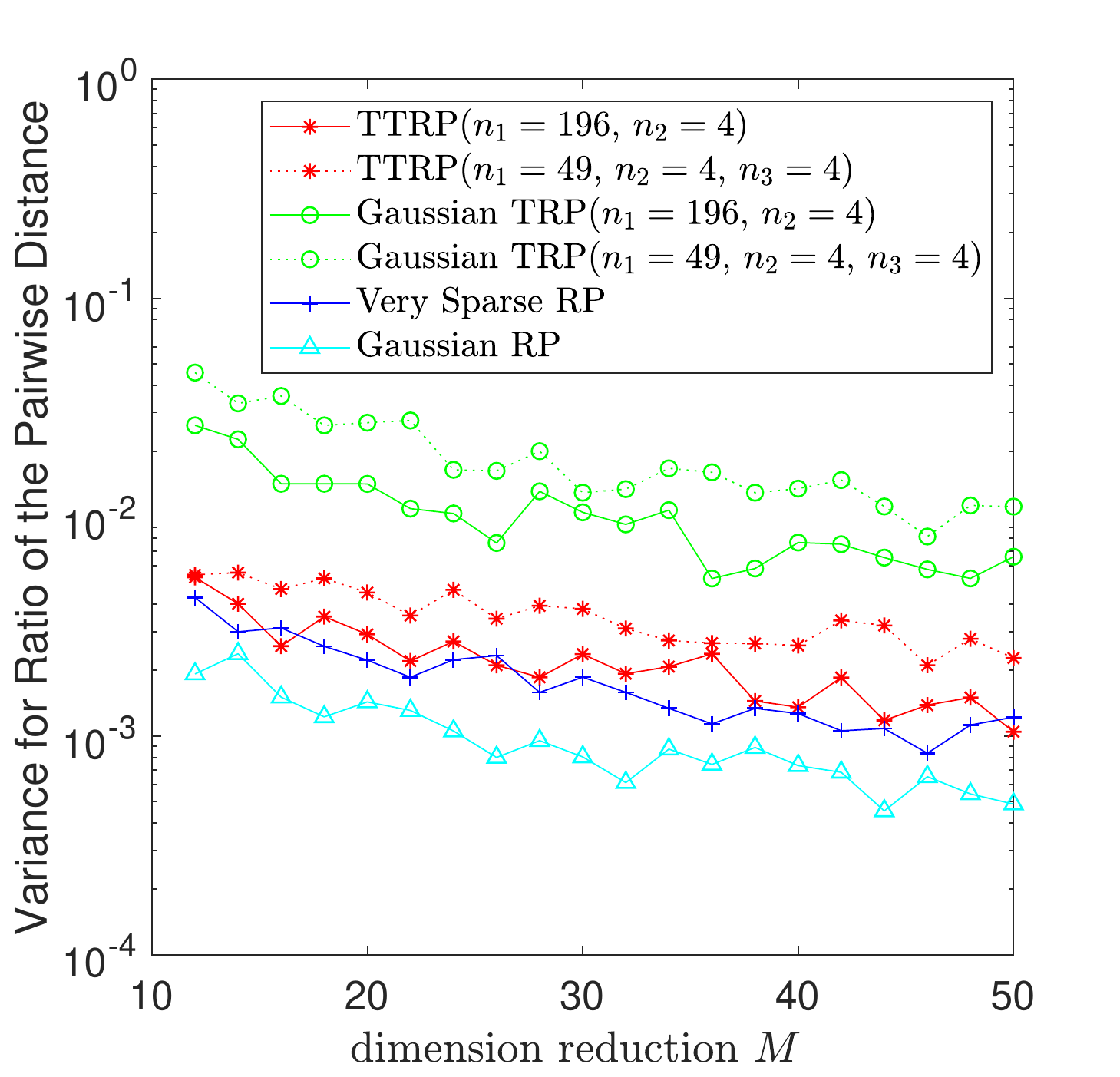}}
	\caption{Isometry and variance quality for MNIST data ($N=784$).}
    \label{minist}
\end{figure}

\section{Conclusion}\label{Conclu}
%A novel tensor train random projection (TTRP) method for dimension reduction is proposed in 
%this work, 
%where the projection matrix is tensorized in the TT-format with TT-ranks equal to one and entries of TT-cores are independently and identically generated through the Rademacher distribution.
%We prove that TTRP is an expected isometric projection with bounded variance. 
%From our numerical studies, 
%we show that our TTRP requires significantly smaller storage and computational costs, 
%compared with standard projection methods.
%We also find that our TTRP has smaller variance 
%than tensor train random projection methods based
%on Gaussian distributions. However, 
%even through we have proven the properties of
%the mean and the variance of TTRP and the numerical results show that TTRP is efficient,
%the upper bound in the concentration inequality \eqref{prop2_ineq} involves the dimensionality of datasets ($N$), and our future work is to  give a tight bound independent of the dimensionality of datasets for the concentration inequality.

Random projection plays a fundamental role in
conducting dimension reduction for high-dimensional datasets, where pairwise distances need to be approximately preserved. With a focus on efficient tensorized computation, this paper develops a novel tensor train random projection (TTRP) method.
Based on our analysis for the bias and the variance,
TTRP is proven to be an expected isometric projection with bounded variance. From the analysis in Theorem \ref{lemma_var}, the Rademacher distribution is shown to be an optimal choice to generate the TT-cores of TTRP. For computational convenience, the TT-ranks of TTRP are set to one, while from our numerical results, we show that different TT-ranks do not lead to significant results for the mean and the variance of the ratio of the pairwise distance. 
Our detailed numerical studies show that, compared with standard projection methods, our TTRP with the default setting (TT-ranks equal one and TT-cores are generated through the Rademacher distribution), requires 
significantly smaller storage and computational costs to achieve a competitive performance.   
From numerical results, we also find that our TTRP has smaller variances than tensor train random projection methods based on Gaussian distributions.
Even though we have proven the properties of the mean and the variance of TTRP and the numerical results show that TTRP is efficient,
the upper bound in the concentration inequality \eqref{prop2_ineq} involves the dimensionality of datasets ($N$), and our future work is to  give a tight bound independent of the dimensionality of datasets for the concentration inequality.

\bmhead{Acknowledgments}
The authors thank Osman Asif Malik and Stephen Becker for helpful suggestions and
discussions.

 This work is supported by the National Natural Science Foundation of China (No. 12071291), the Science and Technology Commission of Shanghai Municipality (No. 20JC1414300) and the Natural Science Foundation of Shanghai (No. 20ZR1436200).

 \section*{Declarations}

% Some journals require declarations to be submitted in a standardised format. Please check the Instructions for Authors of the journal to which you are submitting to see if you need to complete this section. If yes, your manuscript must contain the following sections under the heading `Declarations':

%  \begin{itemize}
%  \item Funding
%  \item Conflict of interest/Competing interests (check journal-specific guidelines for which heading to use)
%  \item Ethics approval 
%  \item Consent to participate
%  \item Consent for publication
%  \item Availability of data and materials
%  \item Code availability 
%  \item Authors' contributions
%  \end{itemize}
 
 The authors declare that they have no known competing financial interests or personal relationships that could have
appeared to influence the work reported in this paper.

% \noindent
% If any of the sections are not relevant to your manuscript, please include the heading and write `Not applicable' for that section. 

% %%===================================================%%
% %% For presentation purpose, we have included        %%
% %% \bigskip command. please ignore this.             %%
% %%===================================================%%
% \bigskip
% \begin{flushleft}%
% Editorial Policies for:

% \bigskip\noindent
% Springer journals and proceedings: \url{https://www.springer.com/gp/editorial-policies}

% \bigskip\noindent
% Nature Portfolio journals: \url{https://www.nature.com/nature-research/editorial-policies}

% \bigskip\noindent
% \textit{Scientific Reports}: \url{https://www.nature.com/srep/journal-policies/editorial-policies}

% \bigskip\noindent
% BMC journals: \url{https://www.biomedcentral.com/getpublished/editorial-policies}
% \end{flushleft}

\begin{appendices}

\section{Example for $\mathbb{E}[\mathbf{y}^2(i)\mathbf{y}^2(j)]\neq \mathbb{E}[\mathbf{y}^2(i)]\mathbb{E}[\mathbf{y}^2(j)],\,i\neq j$.}\label{appendix}

If all TT-ranks of tensorized matrix $\mathbf{R}$ in \eqref{eq_ttfpfull} are equal to one, then $\mathbf{R}$  is represented as a Kronecker product of $d$ matrices,
$$
\mathbf{R}=\mathbf{R}_1 \otimes \mathbf{R}_2\otimes\cdots \otimes \mathbf{R}_d,
$$
where $\mathbf{R}_i \in \mathbb{R}^{m_i\times n_i}$, for $i=1,2,..,d$, whose entries are i.i.d. mean zero and variance one. We just consider $d=2, m_1=m_2=n_1=n_2=2$, then
$$
\mathbf{y}=\mathbf{R}\mathbf{x}=(\mathbf{R}_1 \otimes \mathbf{R}_2)\mathbf{x},
$$
where 
\begin{equation*}
    \mathbf{R}_1=\left[ \begin{array}{cc}
    a_1 & a_2\\
    b_1&b_2
    \end{array}
    \right],\,
    \mathbf{R}_2=\left[ \begin{array}{cc}
    c_1 & c_2\\
    d_1&d_2
    \end{array}
    \right].
\end{equation*}
Hence
\begin{equation*}
    \mathbf{y}=\left[\begin{array}{c}
         \mathbf{y}(1) \\
         \mathbf{y}(2)\\
          \mathbf{y}(3) \\
         \mathbf{y}(4)
    \end{array}
    \right]=\left[ \begin{array}{c}
    a_1c_1x_1 +a_1c_2x_2+a_2c_1x_3+a_2c_2x_4\\
    a_1d_1x_1+a_1d_2x_2+a_2d_1x_3+a_2d_2x_4\\
    b_1c_1x_1+b_1c_2x_2+b_2c_1x_3+b_2c_2x_4\\
    b_1d_1x_1+b_1d_2x_2+b_2d_1x_3+b_2d_2x_4
    \end{array}
    \right].
\end{equation*}
We compute the following,
\begin{align*}
    \text{cov}&\Big(\mathbf{y}^2(1),\mathbf{y}^2(3)\Big)\\
    =&\text{cov}\left(\left(a_1c_1x_1 +a_1c_2x_2+a_2c_1x_3+a_2c_2x_4\right)^2,\left( b_1c_1x_1+b_1c_2x_2+b_2c_1x_3+b_2c_2x_4\right)^2\right)\\
    =&\text{cov}\left(a_1^2c_1^2x_1^2+a_1^2c_2^2x_2^2+a_2^2c_1^2x_3^2+a_2^2c_2^2x_4^2,b_1^2c_1^2x_1^2+b_1^2c_2^2x_2^2+b_2^2c_1^2x_3^2+b_2^2c_2^2x_4^2\right)\\
    &+\text{cov}\left(2a^2_1c_1c_2x_1x_2+2a^2_2c_1c_2x_3x_4,2b^2_1c_1c_2x_1x_2+2b^2_2c_1c_2x_3x_4\right)\\
    =&\left(x^2_1+x^2_3\right)^2\text{var}(c^2_1)+\left(x^2_2+x^2_4\right)^2\text{var}(c^2_2)+4\left(x_1x_2+x_3x_4\right)^2\text{var}(c_1c_2)\\
    =&\left(x^2_1+x^2_3\right)^2\text{var}(c^2_1)+\left(x^2_2+x^2_4\right)^2\text{var}(c^2_2)+4\left(x_1x_2+x_3x_4\right)^2 > 0,
\end{align*}
then $\mathbb{E}\left[\mathbf{y}^2(1)\mathbf{y}^2(3)\right]\neq\mathbb{E}\left[\mathbf{y}^2(1)\right]\mathbb{E}\left[\mathbf{y}^2(3)\right]$. 
Generally, for some $i\neq j$, $\mathbb{E}[\mathbf{y}^2(i)\mathbf{y}^2(j)]\neq \mathbb{E}[\mathbf{y}^2(i)]\mathbb{E}[\mathbf{y}^2(j)]$.

\end{appendices}

%%===========================================================================================%%
%% If you are submitting to one of the Nature Portfolio journals, using the eJP submission   %%
%% system, please include the references within the manuscript file itself. You may do this  %%
%% by copying the reference list from your .bbl file, paste it into the main manuscript .tex %%
%% file, and delete the associated \verb+\bibliography+ commands.                            %%
%%===========================================================================================%%

\bibliography{feng}
%\bibliography{sn-bibliography}% common bib file
%% if required, the content of .bbl file can be included here once bbl is generated
%%\input sn-article.bbl

%% Default %%
%%\input sn-sample-bib.tex%

\end{document}